\documentclass[lettersize,journal]{IEEEtran}
\usepackage[T1]{fontenc}
\usepackage{amsmath,amsfonts}
\usepackage{algorithmic}
\usepackage[ruled,linesnumbered]{algorithm2e}
\usepackage{array}
\usepackage{bm}
\usepackage[caption=false,font=normalsize,labelfont=sf,textfont=sf]{subfig}
\usepackage{textcomp}
\usepackage{stfloats}
\usepackage{url}
\usepackage{verbatim}
\usepackage{graphicx}
\usepackage{cite}
\usepackage{tabularx,booktabs}
\usepackage{multirow}
\usepackage{multicol}
\usepackage{colortbl} 
\usepackage{xcolor}
\usepackage{hyperref}
\usepackage{amssymb}
\usepackage{bbding}
\usepackage{pifont}
\usepackage{colortbl}

\usepackage{siunitx}

\definecolor{myblue}{RGB}{42, 73, 161}
\definecolor{myred}{RGB}{212, 45, 42}

\usepackage{hyperref} 

\hypersetup{
    colorlinks = true,   
    pdfborder = {0 0 0}, 
    linkcolor  = myred,
    citecolor  = myblue,
    urlcolor   = magenta
}

\newcommand{\e}{et al.}
\newcommand{\bb}{\boldsymbol}

\renewcommand{\epsilon}{\varepsilon}
\renewcommand{\phi}{\varphi}

\newtheorem{theorem}{ \bf Theorem}

\newtheorem{proposition}{\bf Proposition}

\newtheorem{remark}{\bf Remark} 

\newcolumntype{L}[1]{>{\raggedright\arraybackslash}p{#1}}
\newcolumntype{C}[1]{>{\centering\arraybackslash}p{#1}}
\newcolumntype{R}[1]{>{\raggedleft\arraybackslash}p{#1}}

\makeatletter
\newcommand{\removelatexerror}{\let\@latex@error\@gobble}
\renewcommand{\maketag@@@}[1]{\hbox{\m@th\normalsize\normalfont#1}}%
\makeatother
\usepackage{makeidx}
\makeindex

\begin{document}
    \title{Personalized Lower-limb Exoskeleton Assistance via Preference-based Bayesian Optimization}
	\author{Xiao-Yin Liu,~\IEEEmembership{} Guotao Li*, Weiqun Wang,~\IEEEmembership{} Zeng-Guang Hou*,~\IEEEmembership{Fellow, IEEE}
        \thanks{ This work was funded by the Noncommunicable Chronic Diseases National Science and Technology Major Project (2024ZD0528000, 2024ZD0528002), the National Natural Science Foundation of China under (Grant 62103412, Grant U22A2056, and Grant 62373013), and the Beijing Natural Science Foundation under (Grant L222053 and L232021). (*Corresponding authors: Guotao Li and Zeng-Guang Hou).}
		\thanks{Xiao-Yin Liu and Guotao Li are with the State Key Laboratory of Multimodal Artificial Intelligence Systems, Institute of Automation, Chinese Academy of Sciences, Beijing 100190, China, and also with the School of Artificial Intelligence, University of Chinese Academy of Sciences, Beijing 100049, China. (e-mail: guotao.li@ia.ac.cn).}

		\thanks{Zeng-Guang Hou is with the State Key Laboratory of Multimodal Artificial Intelligence Systems, Institute of Automation, Chinese Academy of Sciences, Beijing 100190, China, also with the School of Artificial Intelligence, University of Chinese Academy of Sciences, Beijing 100049, China, and also with CASIA-MUST Joint Laboratory of Intelligence Science and Technology, Institute of Systems Engineering, Macau University of Science and Technology, Macao, China. (e-mail: zengguang.hou@ia.ac.cn).}
	}
	
	\markboth{}%
	{Shell \MakeLowercase{\textit{et al.}}: A Sample Article Using IEEEtran.cls for IEEE Journals}
	\maketitle
	
	\begin{abstract}
	A significant challenge in exoskeleton robotics is the need to dynamically adapt control profiles to individual motion preferences, thereby ensuring both efficient and comfortable assistance. Currently, since user experience can serve as a comprehensive metric for evaluating the effectiveness of assistance, user preference-based optimization methods have been widely studied for parameter tuning. However, the existing methods rely heavily on extensive human-robot online interactions and suffer from slow optimization speed, which not only induces user fatigue but also compromises optimization effectiveness. Therefore, this paper aims to explore an efficient preference-based optimization framework for personalized exoskeleton assistance that can learn optimal parameters with minimal interaction. We propose a preference-based Bayesian optimization (\texttt{PbBO}) approach that can improve sample efficiency by leveraging knowledge about the sampling distribution of candidate sets. For optimizing six control parameters, \texttt{PbBO} can fast converge to user-preferred parameters with $90.7\%$ validation accuracy via $20$ iterations. Moreover, the hierarchical controller is designed to generate personalized torque for different tasks and achieve interaction torque tracking in real time. 
    The results of treadmill and outdoor experiments demonstrate that the optimized parameters can reduce metabolic rate by $14.5\% \thicksim 15.4\%$, heart rate by $6.3\% \thicksim 7.6\%$, and muscle activation by $6.7\% \thicksim 31.5\%$ compared to unassisted walking. Related Website: 
    \href{https://youtu.be/8X1SFqUU4G4}{PbBO}.

	\end{abstract}

	\begin{IEEEkeywords}
	Exoskeleton robot; Personalized assistance; Preference optimization
	\end{IEEEkeywords}

	\section{Introduction} \label{sec:introduction}
\IEEEPARstart{M}{illions} of people suffer from mobility impairments caused by weakness of the human neuromotor system, often stemming from factors such as aging, muscle weakness, or stroke \cite{slade2022personalizing,enoka2016translating}. These individuals often walk more slowly and fatigue more easily. Exoskeletons that can augment locomotor strength and reduce energy losses have shown promise for addressing these impairments \cite{kim2019reducing,song2021optimizing}. However, in the real world, providing beneficial motion assistance is challenging due to individual differences in motion characteristics \cite{slade2022personalizing} and control parameters \cite{fang2021improving,lee2023user}.

A key goal in exoskeleton assistance is to design control strategies that meet the specific demands of individual users \cite{lee2023user,liu2025weight}. Human-in-the-loop optimization (HILO), which iteratively updates control parameters based on the user’s response, has become a promising approach to personalized assistance \cite{zhang2017human,2018science, review}. Current control strategies for HILO focus on updating parameters by optimizing physiological objectives (e.g., metabolic rate) \cite{2018science,2025hip,2025knee,2022TROHILO} or user experience objectives (e.g., user preference) \cite{slade2022personalizing,lee2023user,arens2025preference}. Because many factors, including comfort, balance, fatigue, and exertion, influence the user’s experience, a single physiological objective cannot adequately capture individual characteristics \cite{ingraham2022role}. The user’s preference that synthesizes a multi-factorial nature experience is a promising objective for tuning control parameters. 

However, the current optimization methods for users’ preference objectives still face serious challenges: 1) Given the difficulty of describing user preferences in precise mathematical language, accurately learning these preferences from user feedback emerges as the primary challenge; 2) Many optimization algorithms, such as Bayesian optimization, rely on a specific form of the objective function. Thus, how to integrate user preferences into the optimization framework to determine the optimal parameters becomes the second critical issue. Note that for personalized exoskeleton assistance, current studies learn a personalized torque curve within a single gait cycle by optimizing control parameters (peak time, rise time, fall time, and peak magnitude) \cite{zhang2017human,osti_10298403}. These parameters differ across individual users and tasks, and prolonged interaction can lead to user fatigue and an uncomfortable experience. Therefore, the solving efficiency of the optimization algorithm is crucial for personalized assistance with exoskeletons.

For \textit{challenge 1}, data-driven methods \cite{slade2022personalizing,lee2023user} and the Gaussian process (GP) model \cite{tucker2020preference,2021preference,arens2025preference} are applied to learn user preference from human feedback. The first method type requires collecting extensive offline data to train a preference model, which becomes impractical in complex scenarios \cite{slade2022personalizing}. The second type of method necessitates continuous online interaction between the user and the exoskeleton to learn the preference model, where prolonged interaction can lead to user fatigue \cite{2021preference}. For \textit{challenge 2}, one approach is to incorporate the learned preference model trained on offline preference data with the covariance matrix adaptation evolution strategy (CMA-ES) \cite{ingraham2022role,lee2023user}. Another type combines a preference model approximated by a GP with information gain optimization \cite{2021preference}. The third type directly utilizes a preference model to select the optimal parameters from the candidate set \cite{tucker2020preference,slade2022personalizing}. However, the above approaches require more online interaction time and cost when exploring higher-dimensional parameter spaces, which is challenging for subjects due to increased energy expenditure.

Accordingly, this paper proposes an efficient optimization framework that rapidly solves high-dimensional parameter optimization problems while minimizing individual energy expenditure through human-in-the-loop feedback. For preference learning, data-driven methods typically require large-scale offline human preference datasets, which are often impractical to collect due to high costs, especially for the rehabilitation scenario. To address this, we adopt an online interaction strategy for preference learning, leveraging a Gaussian Process (GP) model to refine user preferences during optimization iteratively \cite{chu2005preference}. 
In the context of HILO, Bayesian optimization (BO) has emerged as a widely adopted approach. While Kutulakos \e \cite{kutulakos2024simulating} demonstrated BO's superior performance compared to alternative methods (e.g., CMA-ES) for HILO applications, its requirement for a well-defined objective function remains a limitation. To address this, we integrate preference learning with Bayesian optimization, enabling simultaneous learning of the implicit objective function and optimization of control parameters.

\begin{figure*}
	\centering
	\includegraphics[width=0.99\textwidth]{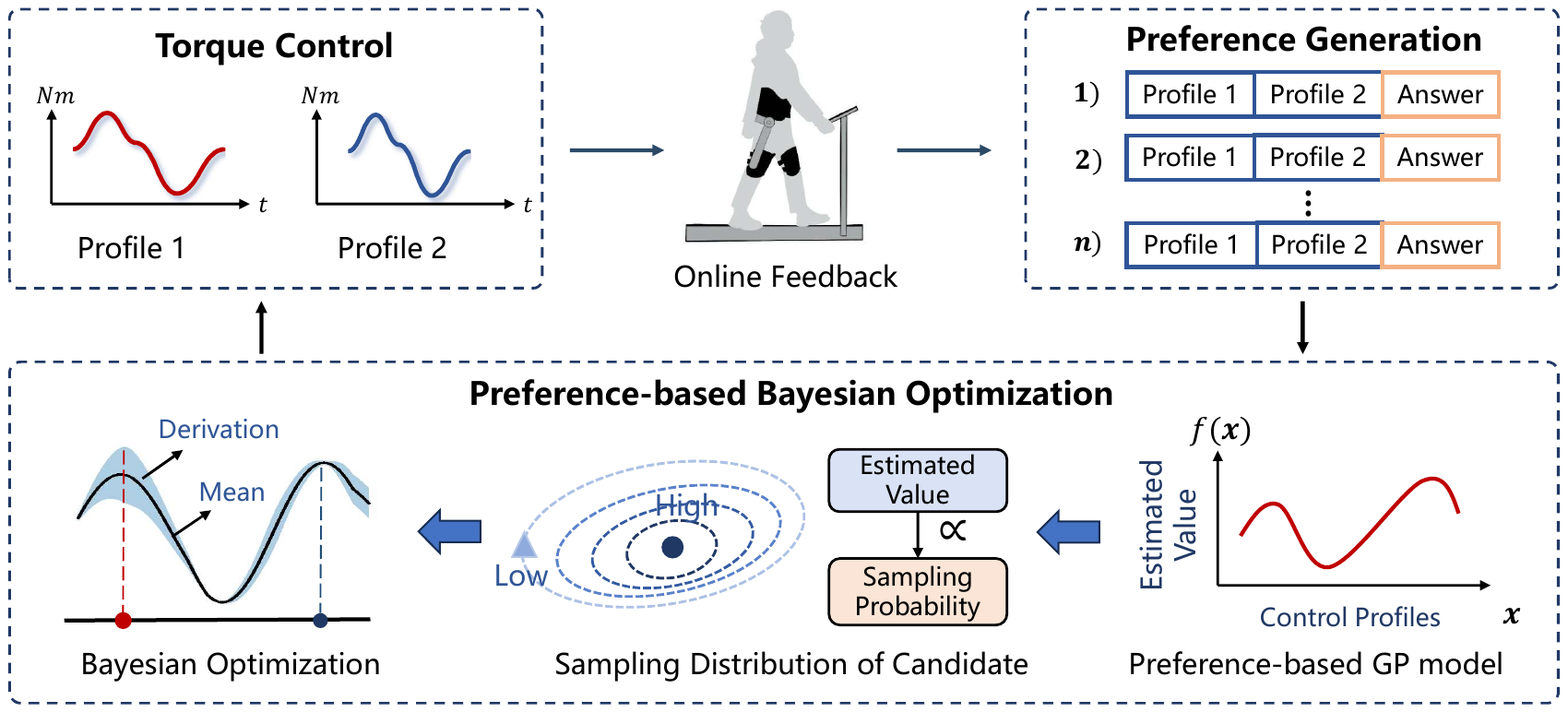}
	\caption{Overview of the proposed optimization framework. This method optimizes exoskeleton torque profiles based on user preference feedback. During each experimental trial, the user evaluates two distinct exoskeleton control parameter sets and provides comparative preference feedback. The preference data is utilized to iteratively refine a Gaussian process-based preference model, which subsequently generates sampling distributions for candidate sets. Bayesian optimization is then used to generate two offspring control-parameter sets for user evaluation. This interactive process continues iteratively until predefined termination criteria are satisfied.}. 
	\label{fig_framework}
\end{figure*}

Based on the above analysis, we propose a novel Preference-based Bayesian Optimization (\texttt{PbBO}) framework that integrates a Gaussian process-based preference model with BO. 
In this framework, to ensure optimization stability, we enhance the acquisition function in Bayesian optimization, effectively balancing exploration in the initial phase and exploitation in later stages. Moreover, to improve sample efficiency, we design an adaptive sampling distribution for candidate regions based on estimated preference values, prioritizing regions with higher preference values to accelerate convergence. 
Fig. \ref{fig_framework} shows the overall framework of preference-based Bayesian optimization for personalized exoskeleton assistance. The six control parameters of the hip torque curve are optimized continuously based on user feedback until the termination condition is reached. \texttt{PbBO} jointly updates both the preference model and control parameters through iterative user feedback, enabling efficient optimization without requiring explicit objective functions. 

To enable real-time, efficient exoskeleton assistance in outdoor environments, this paper presents a hierarchical control framework. The high-level layer is exploited to estimate the user’s walking velocity in real time and generate the corresponding personalized interaction torque curve based on \texttt{PbBO}. The low-level controller employs the nonlinear disturbance observer (NDOB) to estimate the interaction torque and controls motor torque to track the desired personalized torque \cite{mohammadi2013nonlinear,2017NDOB}.
To validate the performance of the proposed framework, this paper conducts experiments on both a treadmill and in an outdoor community setting. The experimental results indicate that \texttt{PbBO} can converge to the user-preferred parameters with $90.7\%$ accuracy across $20$ optimization trials, and the proposed assistance strategy yields a significant reduction of $14.5\% \thicksim 15.4\%$ in metabolic rate, $6.3\% \thicksim 7.6\%$ in heart rate, and $6.7\% \thicksim 31.5\%$ in muscle activation compared to unassisted walking. The key contributions can be summarized as: 
\begin{enumerate}
    \item A novel user preference optimization framework, named \texttt{PbBO}, is proposed for lower-limb exoskeleton assistance, which can accurately optimize preference parameters with minimal interaction.
    \item An adaptive sampling distribution for the candidate set and the trade-off coefficient are introduced to enhance both convergence efficiency and optimization stability of the proposed framework.
    \item A hierarchical control architecture is developed to precisely generate and track personalized assistance torque, which effectively reduces human energy expenditure for different walking tasks.
\end{enumerate}

The structure of this paper is organized as follows:
Section \ref{sec:Related work} reviews prior research in preference learning and human-in-the-loop optimization for personalized exoskeleton assistance.
Section \ref{model} presents the architecture of the hip exoskeleton system, the dynamic model, the relevant control parameters, and the hierarchical control strategy.
Section \ref{method} introduces key concepts in preference learning and Bayesian optimization, and then provides a detailed explanation of the proposed preference-based Bayesian optimization framework.
Section \ref{experiments_setup} describes the experimental protocols for both indoor treadmill and outdoor testing.
Section \ref{experiments} presents experimental results that validate the effectiveness of the proposed optimization approach.
Section \ref{Discussion} provides an in-depth discussion of personalized assistance and the study's limitations.
Finally, Section \ref{conclusion} concludes the paper.


	\section{Related Works}\label{sec:Related work}
This section reviews state-of-the-art research in user motion preference learning and human-in-the-loop optimization methodologies for exoskeleton assistance. Then, a comprehensive comparison is presented to highlight the key differences and advantages of the proposed optimization framework compared with existing techniques.

\subsection{User Motion Preference Learning}
User motion preference learning is the process of developing a computational model that discerns which set of motion trajectories best aligns with human preferences. The objective of preference learning is to guide the optimization direction of parameters by user feedback.
Current research in user preference learning can be broadly categorized into two main approaches: data-driven techniques \cite{slade2022personalizing,lee2023user} and statistical-based methods \cite{tucker2020preference,2021preference,arens2025preference}. 
For data-driven approaches, Slade \e \cite{slade2022personalizing} trained a classifier to estimate the probability of the first control parameter set outperforming the second, facilitating the ranking of multiple parameter configurations. Lee \e \cite{lee2023user} trained the RankNet model based on the Bradley-Terry framework to classify user motion preferences. The data-driven approaches require collecting preference data from multiple subjects, which incurs non-negligible data acquisition costs. 
For statistical-based approaches, Tucker \e \cite{tucker2020preference} and Arens \e \cite{arens2025preference} developed a preference learning model within a Gaussian process framework, which iteratively refines the model through finite online human-robot interactions until convergence criteria are satisfied. These methods rely solely on real-time human-exoskeleton interaction data, thereby eliminating the need for offline data collection. In certain application scenarios, such as rehabilitation settings, collecting large volumes of preference data incurs prohibitive costs and is practically infeasible. Accordingly, this work adopts the second approach for preference learning.

\subsection{Human-in-the-loop Optimization}
Human-in-the-loop optimization (HILO) is an iterative process that continuously refines control parameters through real-time human-robot interaction to maximize predefined optimization objectives. Existing research on exoskeleton assistance primarily focuses on two key aspects: optimization strategies and objectives. The optimization strategies are predominantly based on well-established and widely adopted algorithms, such as Bayesian optimization \cite{2018science,2025hip,2025knee,arens2025preference} and CMA-ES \cite{zhang2017human,ingraham2022role,lee2023user}. Regarding optimization objectives, they can be broadly categorized into two classes: explicit objective functions (e.g., physical metrics such as metabolic rate \cite{zhang2017human,2018science,2025knee,2022TROHILO}) and user preferences (implicit functions) \cite{slade2022personalizing,lee2023user,arens2025preference}. For the first type of objective, Zhang \e \cite{zhang2017human} employed the covariance matrix adaptation evolution strategy to minimize the metabolic rate measured through respirometry. Ding \e \cite{2018science} adopted a first-order dynamic model to estimate metabolic rate and subsequently optimized this objective using Bayesian optimization. Gordon \e \cite{2022TROHILO} evaluated metabolic rate online using a musculoskeletal model and solved the optimal assistance parameters via Bayesian optimization.


However, since multiple factors, including fatigue, balance, and comfort, substantially affect the user experience, relying solely on individual physical metrics fails to adequately characterize the efficacy of exoskeletal assistance. User preference has been incorporated as the second type of objective function in the optimization framework \cite{slade2022personalizing,lee2023user,arens2025preference}. Slade \e \cite{slade2022personalizing} developed a scoring model by integrating joint kinematic information to quantify user movement preferences, which was subsequently refined through online interaction to optimize control parameters. In a related approach, Lee \e \cite{lee2023user} trained a user preference model using offline preference data and combined it with Bayesian optimization (BO) to tune control parameters. While both methods demonstrate precise learning of user movement preferences, they rely heavily on extensive offline datasets, thereby incurring substantial interaction costs. Arens \e \cite{arens2025preference} developed a preference model using Gaussian process regression and enhanced optimization efficiency by incorporating domain knowledge on just-noticeable differences between assistance settings. This approach eliminates the need for offline dataset collection, requiring only limited online interactions to optimize the parameters.

\begin{table}
	\normalsize
	\caption{Comparison results of the proposed method with existing exoskeleton optimization methods.}
	\label{tab_comparsion}
	\centering
	\resizebox{9cm}{!}{
		\begin{tabular}{c|ccc|cc}
\toprule[1pt] 
\textbf{Objective} & \textbf{Related} & \textbf{Post} & \textbf{Outdoor} & \textbf{Parameter} & \textbf{Iteration}\\
\textbf{function}& \textbf{works}& \textbf{train} & \textbf{deploy} & \textbf{dimension}& \textbf{number}\\

\midrule
\multirow{3}{*}{\textbf{ Metabolic}}&Ref. \cite{zhang2017human}& \XSolidBrush & \XSolidBrush & 4 &N/A\\
&Ref. \cite{2018science}& \XSolidBrush &\XSolidBrush &2 &20 \\
&Ref. \cite{2022TROHILO}& \XSolidBrush & \XSolidBrush& 4&24 \\
\midrule
\multirow{4}{*}{\textbf{ Preference}}&Ref. \cite{slade2022personalizing}& \Checkmark &\Checkmark & 4& N/A \\
&Ref. \cite{lee2023user}& \Checkmark & \XSolidBrush & 4 & 50 \\
&Ref. \cite{arens2025preference}&  \XSolidBrush &  \XSolidBrush & 2 & 12 \\
&\texttt{\textbf{PbBO}}& \XSolidBrush &\Checkmark & 6 & 20 \\
\bottomrule[1pt]
\end{tabular}
}
\end{table}

\subsection{The Proposed Framework}
The proposed framework falls into the broad category of objective functions based on user preferences. Similar to prior works \cite{tucker2020preference,2021preference,arens2025preference}, this paper leverages online interaction data to optimize a Gaussian process-based preference model, eliminating the need for offline data collection. Subsequently, we integrate Bayesian optimization with the preference model to optimize control parameters and design a sampling distribution for candidate sets, thereby enhancing the algorithm's optimization efficiency. Table \ref{tab_comparsion} provides a systematic comparison between the proposed method and state-of-the-art approaches, highlighting critical differences in pretraining requirements, suitability for outdoor deployment, and computational efficiency metrics, such as the number of optimized parameters and the number of convergence iterations. In contrast to previous user preference-based optimization techniques, the proposed framework, named \texttt{PbBO}, demonstrates the following key differences and advantages:

\textbf{1) Optimization Efficiency:} Previous preference-based optimization methods \cite{slade2022personalizing,lee2023user,arens2025preference} are typically limited to parameter spaces of up to four dimensions and often require excessive iterations, which could lead to user fatigue. In contrast, \texttt{PbBO} integrates a preference model with Bayesian optimization, enabling efficient identification of near-optimal 6-dimensional preference parameters with minimal human-robot interactions. This approach significantly enhances both sampling and optimization efficiency; \textbf{2) Data Requirement:} Compared with the approaches in \cite{slade2022personalizing} and \cite{lee2023user}, \texttt{PbBO} eliminates the need for collecting preference data from multiple subjects. More importantly, it circumvents the generalization weakness inherent in data-driven preference network training across different subjects, thereby significantly reducing the data requirements; \textbf{3) Outdoor Deployment:} While previous studies \cite{lee2023user} and \cite{arens2025preference} have demonstrated parameter optimization capabilities in various subjects and tasks within laboratory settings, they lack experimental validation for outdoor applications with varying walking speeds. This paper introduces the hierarchical control framework engineered to address the dynamic requirements of outdoor ambulatory scenarios.

	\section{System Description and Control Framework}\label{model}
This section presents a description of the portable hip exoskeleton and its system architecture and derives the relationship between the motion-output torque and the human-robot interaction torque. Then, the critical control parameters that govern the torque profile are defined, and the hierarchical control architecture is described.

\subsection{System Description}
Fig. \ref{fig_composition} illustrates a schematic of the mechanical design and components of the hip exoskeleton. The system consists of a waist belt, adjustable elastic straps, thigh support frames, and a hip joint actuation system, with a total system weight of $3.8\text{ kg}$. The system employs a lightweight, autonomous bilateral hip exoskeleton designed to assist hip flexion/extension, delivering a maximum continuous torque of $9\text{ Nm}$ and a peak torque of $22\text{ Nm}$. Torque is generated by a quasi-direct-drive actuator (AK80-9 T-Motor, China). To ensure secure attachment, a waist orthotic and two thigh orthotics anchor the exoskeleton to the user’s pelvis and thighs, respectively. The two-degree-of-freedom (DOF) free-pivoting joint is applied to enable unrestricted hip movement.

\begin{figure}
	\centering
	\includegraphics[width=0.40\textwidth]{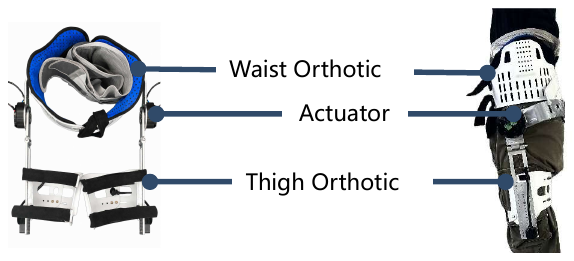}
	\caption{Untethered hip exoskeleton. The exoskeleton system comprises a waist-mounted battery pack, a drive motor, and a waist and thigh orthotic.}. 
	\label{fig_composition}
\end{figure}

The torque commanded $\bb{u}$ is converted to a desired motor current $\bb{i}$ using the following relationship, which incorporates the transmission ratio $N=9:1$ and the motor coefficient $K_t=0.89 \text{ Nm/A}$):
\begin{equation}
    \bb{i}=\frac{\bb{u}}{K_t\cdot N},
\end{equation}
Given a predefined torque profile $\bb{u}$, the motor can be directly controlled to provide walking assistance. The methodology for generating $\bb{u}$ is detailed in the following section.

\subsection{Dynamic Model and Control Parameters}
Since the motor's torque output does not directly reflect the torque experienced by the user, we develop a dynamic model of the exoskeleton to characterize the relationship between the motor-generated torque and the human-exoskeleton interaction torque. Furthermore, the key parameters requiring optimization in the hip joint torque profile are explicitly defined. 

\subsubsection{\textbf{Dynamic Model}} Let the rotational angle of robot joint be $\bb{q}\in \mathcal{R}^{n}$, and the output torque of motion be $\bb{u}\in \mathcal{R}^{n}$, where $n$ is the number of joints. Then, following \cite{han2023human}, the dynamic model of an exoskeleton robot, with consideration of human–robot interaction, can be given by
\begin{equation}\label{meq_1}
    \bb{M}(\bb{q})\ddot{\bb{q}}+\bb{C}({\bb{q}},\dot{\bb{q}})\dot{\bb{q}}+\bb{G}(\bb{q})+\bb{\tau}_{int}=\bb{u},
\end{equation}
where $\bb{M}(\bb{q})\in\mathcal{R}^{n \times n}$ and $\bb{C}({\bb{q}},\dot{\bb{q}})\in\mathcal{R}^{n \times n}$ are inertial matrix and velocity-dependent matrix respectively; $\bb{G}_q\in \mathcal{R}^{n}$ represent gravitational torque; $\bb{\tau}_{int}\in \mathcal{R}^{n}$ is the human-robot interaction torque. The dynamic model described by Eq. \eqref{meq_1} has the following properties \cite{2024chen}: 1) The matrix $\bb{M}(\bb{q})$ is symmetric, positive definite, and has bounded entries, i.e. $\|\bb{M}(\bb{q}\|\leq \varphi_1$; 2) The expression $\bb{M}(\bb{q})-2\bb{C}({\bb{q}},\dot{\bb{q}})$ exhibits skew-symmetry; The above parameters can be obtained from Lagrangian dynamics model. Dynamic components such as $\bb{M}$, $\bb{C}$, and $\bb{G}$ can be expressed as functions of mass, moment of inertia, and center of mass positions for each link of the hip exoskeleton \cite{kuccuktabak2024haptic}. These inertial properties can be derived from the exoskeleton's CAD model.

\subsubsection{\textbf{Control Parameters (Profile)}} The goal of the exoskeleton is to control the torque of the motor $\bb{u}$ to improve the comfort of the assistance and reduce the energy cost of the user. The human-robot interactive torque $\bb{\tau}_{int}$ generated by the motor's output torque $\bb{u}$ should align with the output torque curve of the human hip joint. Furthermore, the hip joint torque profile exhibits significant variability across individuals and assistance tasks (e.g., walking speed). Therefore, we propose a parameterized control framework defined by a set of adjustable parameters $\bb{x}$ to enable precise characterization of interaction torque profiles across different subjects and task conditions, and to enhance both comfort and assistance efficiency in human-robot interaction. In the following part, we present the detailed derivation of the key control parameters (torque profile).

\begin{figure}
	\centering
	\includegraphics[width=0.47\textwidth]{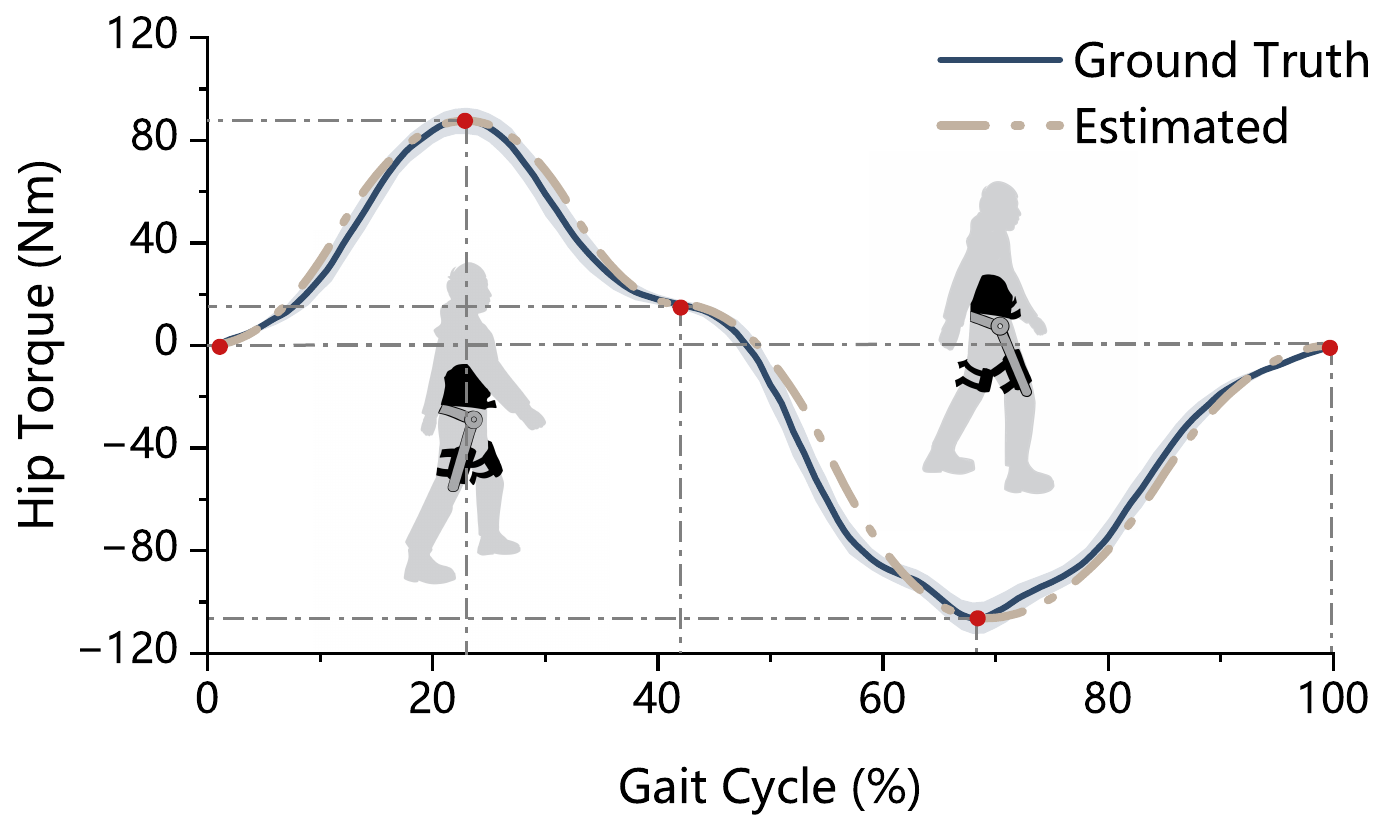}
	\caption{Comparison of the biological hip joint torque curve and the curve generated by Eq. \eqref{eq_m.3}. The solid line represents the actual torque profile over one gait cycle, while the dashed line represents the profile generated by the analytical model.}. 
	\label{fig_torque}
\end{figure}

By observing the average hip joint torque curves in the collected offline dataset \cite{camargo2021comprehensive}, we identified that the entire curve can be segmented into four distinct parts in each gait cycle (0\%-100\%), demarcated by three critical points (see Fig. \ref{fig_torque}). Therefore, the torque curve can be represented with five points $[(0,0),(a_1,b_1),(a_2,b_2),(a_3,b_3),(100,0)]$, that is, the control parameters $\bb{x}$ are $[a_1,b_1,a_2,b_2,a_3,b_3]$. Within each segment, the rate of torque variation initially increases and then decreases, with each subsection approximating a quadratic curve. Therefore, within a segment of the curve commencing at point $(x_0,y_0)$ and terminating at point $(x_1,y_1)$, we denote the inflection point where the curvature transforms as
\begin{equation}
    x_{s}=x_0+\eta*(x_1-x_0),\text{ } y_{s}=y_0+\eta*(y_1-y_0),
\end{equation}
where $\eta$ controls the position of the middle point between $(x_0,y_0)$ and $(x_1,y_1)$. Here, we set $\eta$ as $0.5$ in our experiments. Then, with the two endpoints as the vertices of the two parabolas, the curve expression of this section can be expressed as
\begin{equation}\label{eq_m.3}
    \begin{aligned}
        &L_0(x)=k_0(x-x_0)^2+y_0,\text{ }L_1(x)=k_1(x-x_1)^2+y_1\\
        &where \quad k_0=\frac{y_s-y_0}{(x_{s}-x_0)^2},\quad k_1=\frac{y_s-y_1}{(x_{s}-x_1)^2},
    \end{aligned}
\end{equation}

Here, we select the maximum value point $(a_1,b_1)$ and the minimum value point $(a_3,b_3)$ of the torque curve as two critical points, and another critical point $(a_2,b_2)$ is the approximate inflection point between $(a_1,b_1)$ and $(a_3,b_3)$. Note that the interaction torque typically represents approximately $15\%$ of the human biomechanical torque \cite{molinaro2024task}. Consequently, the normal range of human hip joint torque, typically $[-100,100]\text{ Nm}$, can be scaled to approximately $[-15,15]\text{ Nm}$ for human-robot interaction applications. Fig. \ref{fig_torque} compares the curve fitted from three key points and two start and end points with the torque curve in the public dataset \cite{camargo2021comprehensive}. This figure shows that Eq. \eqref{eq_m.3} better fits the true torque curve. Regarding the hip torques of the left and right joints, there exists a phase difference of $(a_3-a_1)\%$ between their gait cycles. This characteristic implies that only one set of control parameters is needed to generate the control curves for both joints.

\subsection{Hierarchical Control Architecture}
The above presents the method for generating personalized interaction torques $\bb{\tau}_{int}$ based on Eq. \eqref{eq_m.3}. Combined with the dynamic model in Eq. \eqref{meq_1}, the motor torques $\bb{u}$ can be computed. However, the interaction torque is difficult to measure directly. Herein, a nonlinear disturbance observer (NDOB) is employed to estimate the interaction torque $\bb{\tau}_{int}$, based on which the output motor torque $\bb{u}$ is subsequently determined.
To achieve adaptive outdoor assistance, it is necessary to continuously estimate the user's walking speed $v$ and subsequently generate the corresponding torque curve $\bb{\tau}_{int}$. Accordingly, a hierarchical control framework is adopted herein: the high-level estimates walking speed and produces personalized torque curves, while the low-level outputs motor torque via the dynamic model, thereby enabling efficient exoskeleton assistance. 

\subsubsection{\textbf{High-level Controller}} The human walking speed $v$ can be indirectly reflected by the duration of gait cycles $T_{cycle}$, and the two are approximately related as follows:
\begin{equation}\label{eq_24}
    v=A\cdot \exp (K\cdot T_{cycle})+B,
\end{equation}
where the parameters $A$, $B$ and $K$ can be derived by fitting the pairs of time $T_{cycle}$ and speed $v$ data collected during the experimental phase. The data pair is averaged across multiple subjects, and the same task is repeated multiple times to reduce the effect of random variation.
Here, the gait cycle duration $T_{cycle}$ can be calculated by the time it takes for the joint trajectory $\bb{q}$ to depart from and subsequently return to its reference angle $\bb{q}_{ref}$. Thereby, the corresponding walking speed $v$ can be estimated.

Based on the estimated speed $v$, the corresponding preference parameters can be selected, thereby generating the individualized interaction torque curve $\bb{\tau}_{int}$. However, since the curves generated from Eq. \eqref{eq_m.3} are defined over a fixed abscissa of $0\thicksim100$, the gait time $T_{tcycle}$ needs to be integrated to generate the final real-time torque.
In the actual control process, the estimated cycle time $T_{cycle}$ is used to determine the number of sampling points $n_{sample}=T_{cycle}/t_{sample}$, where $t_{sample}$ is the sampling time interval for the generated torque data. Thus, the generated curves can reflect the walking speed. Ultimately, personalized assistance can be achieved across varying walking speeds.

\subsubsection{\textbf{Low-level Controller}}  According to the Eq. \eqref{meq_1}, we can derive the following equation \cite{li2016adaptive,lizhi2022}:
\begin{equation}\label{eq_2_new}
\bb{W}_q(\bb{q},\bb{\dot{q}},\bb{\ddot{q}})\Psi_q+\bb{\tau}_{int}=\bb{u},
\end{equation}
where $\bb{W}_q(\bb{q},\bb{\dot{q}},\bb{\ddot{q}})\Psi_q=\bb{M}(\bb{q})\ddot{\bb{q}}+\bb{C}({\bb{q}},\dot{\bb{q}})\dot{\bb{q}}+\bb{G}(\bb{q})$, that is the dynamic model can be linear in a
set of physical parameters $\Psi_q$. 
Since the actual interaction torque $\bb{\tau}_{int}$ is difficult to obtain directly, the nonlinear disturbance observer (NDOB) \cite{mohammadi2013nonlinear,2017NDOB,huo2019force} is employed here to estimate it. The estimated interaction torque $\bb{\hat\tau_{int}}$ can be denoted as:
\begin{equation}
\left\{
\begin{aligned} 
        &\dot{\bb{Z}} = \bb{L}(\bb{q},\dot{\bb{q}})\left(-\bb{Z}-\bb{u}+\bb{Q}(\bb{q},\dot{\bb{q}})-\bb{P}(\bb{q},\dot{\bb{q}})\right),\\
        &\bb{\hat\tau_{int}} = -\bb{Z}  -\bb{P}(\bb{q},\dot{\bb{q}}),
    \end{aligned}
\right.
\end{equation}
where $\bb{Q}(\bb{q},\dot{\bb{q}}) = \bb{C}({\bb{q}},\dot{\bb{q}})\dot{\bb{q}}+\bb{G}(\bb{q})$, and the observer gain matrix $\bb{L}(\bb{q},\dot{\bb{q}}) \in \mathcal{R}^{n\times n}$ and the vector $\bb{P}(\bb{q},\dot{\bb{q}})\in \mathcal{R}^{n\times n}$ can be expressed as \cite{2017NDOB,2024chen}:
\begin{equation}\label{eq_28}
\left\{
\begin{aligned} 
        &\bb{L}(\bb{q},\dot{\bb{q}}) = \bb{X}^{-1} \bb{M}^{-1}(\bb{q}),\\
        &\bb{P}(\bb{q},\dot{\bb{q}}) = \bb{X}^{-1} \dot{\bb{q}},
    \end{aligned}
\right.
\end{equation}
where $\bb{X} \in \mathcal{R}^{n\times n}$ is a constant symmetric and invertible matrix. Here, we define the tracking error of interaction torque $\bb{e}$, denoted as $\bb{e}_r=\bb{\tau}_{int}^{d}-\bb{\tau}_{int}$. The estimation error can be denoted as $\Delta{\bb{\tau}}=\bb{\tau}_{int}-\bb{\hat\tau}_{int}$, and then the derivation of estimation error $\Delta\dot{\bb{\tau}}$ can be expressed as
\begin{equation}\label{eq_track}
    \Delta{\dot{\bb{\tau}}} =-\bb{X}^{-1} \bb{M}^{-1}(\bb{q}) \Delta{\bb{\tau}}+\dot{\bb{\tau}}_{int},
\end{equation}
The detailed derivation can be found in Appendix A. Then, according to \cite{mohammadi2013nonlinear}, the following proposition can illustrate the guarantee the convergence of $\Delta{\bb{\tau}}$:

\begin{proposition}[Theorem 1 of \cite{mohammadi2013nonlinear}]\label{pro_1}
    The interaction torque estimation error $\Delta{\bb{\tau}}$ converges to zero at a minimum exponential exponential convergence rate $\alpha = \lambda_{min}(\bb{\Gamma})/(2\varphi_1\|\bb{X}\|^2)$ when the following conditions are satisfied: 
    \begin{enumerate}
        \item The matrix $\bb{\Gamma} = \bb{X}+\bb{X}^T-\bb{X}^T\dot{\bb{M}}(\bb{q})\bb{X}$ is positive definite; 
        \item The rate of change of interaction torque is negligible relative to the estimation error in the dynamic Eq. \eqref{eq_track}, i.e., $\dot{\bb{\tau}}_{int}\approx 0$.
    \end{enumerate}
\end{proposition}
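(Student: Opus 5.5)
Under condition 2, with $\dot{\bb{\tau}}_{int}\approx 0$, Eq. \eqref{eq_track} reduces to the time-varying linear system $\Delta\dot{\bb{\tau}}=-\bb{X}^{-1}\bb{M}^{-1}(\bb{q})\Delta\bb{\tau}$. I would prove exponential convergence with a Lyapunov function. The natural candidate is the one used in \cite{mohammadi2013nonlinear}:
\begin{equation*}
V(\Delta\bb{\tau}, \bb{q})=\Delta\bb{\tau}^T\bb{X}^T\bb{M}(\bb{q})\bb{X}\Delta\bb{\tau}.
\end{equation*}
Since $\bb{M}(\bb{q})$ is symmetric positive definite and $\bb{X}$ is invertible, $V$ is positive definite. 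Moreover, the bound $\|\bb{M}(\bb{q})\|\leq\varphi_1$ gives the upper estimate $V\leq \varphi_1\|\bb{X}\|^2\|\Delta\bb{\tau}\|^2$.

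Next I would differentiate $V$ along trajectories. Write
\begin{equation*}
\dot V=2\Delta\bb{\tau}^T\bb{X}^T\bb{M}\bb{X}\Delta\dot{\bb{\tau}}+\Delta\bb{\tau}^T\bb{X}^T\dot{\bb{M}}\bb{X}\Delta\bb{\tau}.
\end{equation*}
Substituting $\Delta\dot{\bb{\tau}}=-\bb{X}^{-1}\bb{M}^{-1}\Delta\bb{\tau}$, the first term collapses to $-2\Delta\bb{\tau}^T\bb{X}^T\Delta\bb{\tau}$. Because this is a scalar quadratic form, it equals $-\Delta\bb{\tau}^T(\bb{X}+\bb{X}^T)\Delta\bb{\tau}$. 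Therefore
\begin{equation*}
\dot V=-\Delta\bb{\tau}^T\bigl(\bb{X}+\bb{X}^T-\bb{X}^T\dot{\bb{M}}(\bb{q})\bb{X}\bigr)\Delta\bb{\tau}=-\Delta\bb{\tau}^T\bb{\Gamma}\Delta\bb{\tau}\leq-\lambda_{min}(\bb{\Gamma})\|\Delta\bb{\tau}\|^2,
\end{equation*}
where the inequality uses condition 1.

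Combining this with the upper bound on $V$ gives
\begin{equation*}
\dot V\leq-\frac{\lambda_{min}(\bb{\Gamma})}{\varphi_1\|\bb{X}\|^2}V=-2\alpha V.
\end{equation*}
By Grönwall's (comparison) lemma, $V(t)\leq V(0)e^{-2\alpha t}$. To turn this into a bound on $\|\Delta\bb{\tau}\|$, I also need a lower bound on $V$. I would use the uniform positive definiteness of $\bb{M}$, i.e. $\bb{M}\succeq m_0 I$ for some $m_0>0$, which holds for exoskeleton inertia matrices on a compact configuration set. This gives $V\geq m_0\sigma_{min}(\bb{X})^2\|\Delta\bb{\tau}\|^2$. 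Hence $\|\Delta\bb{\tau}(t)\|\leq c\,e^{-\alpha t}\|\Delta\bb{\tau}(0)\|$, which is exponential convergence at rate $\alpha$.

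I expect the main obstacle to be the careful handling of the time-varying $\bb{M}(\bb{q}(t))$. Positive definiteness of $\bb{\Gamma}$ must hold uniformly along trajectories so that $\lambda_{min}(\bb{\Gamma})$ is a genuine positive constant. This requires $\dot{\bb{M}}$ to be bounded, which I would justify through boundedness of $\dot{\bb{q}}$ during gait. The uniform lower bound on $\bb{M}$ likewise has to be stated as an explicit assumption, since the paper only lists the upper bound $\varphi_1$. Condition 2 is treated as exact; if it is not, the same computation yields only ultimate boundedness of $\Delta\bb{\tau}$, with the bound proportional to $\sup\|\dot{\bb{\tau}}_{int}\|$.
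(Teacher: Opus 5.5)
Your proposal is correct and matches the intended argument. The paper gives no proof of its own; it imports Theorem 1 of \cite{mohammadi2013nonlinear}, whose proof is your Lyapunov computation with $V=\Delta\bb{\tau}^T\bb{X}^T\bb{M}(\bb{q})\bb{X}\Delta\bb{\tau}$, yielding $\dot V=-\Delta\bb{\tau}^T\bb{\Gamma}\Delta\bb{\tau}$ and the rate $\alpha$ from $V\leq\varphi_1\|\bb{X}\|^2\|\Delta\bb{\tau}\|^2$. The extra hypotheses you flag are exactly what the citation leaves implicit: a uniform lower bound $\bb{M}\succeq m_0\bb{I}$, and a uniform positive lower bound on $\lambda_{min}(\bb{\Gamma})$ along trajectories. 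Both hold trivially in the paper's experimental setting, where $\bb{M}$ is constant, so $\bb{\Gamma}=2\bb{X}$.
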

where $\lambda_{min}(\cdot)$ denotes the minimum eigenvalue of a matrix, $\|\cdot\|$ is the 2-norm of a vector or the induced 2-norm of a matrix, and $\varphi_1$ is the bound of $\bb{M}(\bb{q})$.

To achieve $\bb{\tau}_{int} \xrightarrow{} \bb{\tau}_{int}^{d}$, we use proportional-derivative (PD) control. The control law can be expressed as:
\begin{equation}\label{eq_control}
    \bb{u}= \bb{K}_p \hat{\bb{e}}_r + \bb{K}_d \dot{\hat{\bb{e}}}_r +\bb{W}_q\Psi_q+\bb{\tau}_{int}^d,
\end{equation}
where $\bb{K}_p \in \mathcal{R}^{n\times n}$ and $\bb{K}_d \in \mathcal{R}^{n\times n}$ are positive constant diagonal matrices, and $\hat{\bb{e}}_r = \bb{\tau}_{int}^{d}-\bb{\hat\tau}_{int}$. 
Given that $\bb{e}_r=\bb{\tau}_{int}^{d}-\bb{\tau}_{int}=\hat{\bb{e}}_r-\Delta\bb{\tau}$, and substituting Eq. \eqref{eq_control} into Eq. \eqref{eq_2_new}, we can conclude that:
\begin{equation}\label{eq_31}
    (\bb{K}_p+\bb{I}) {\bb{e}}_r + \bb{K}_d \dot{{\bb{e}}}_r = -\bb{K}_p\Delta\bb{\tau}-\bb{K}_d \Delta\bb{\dot\tau}.
\end{equation}
Furthermore, based on Proposition \ref{pro_1} (condition 2) and Eq. \eqref{eq_track}, Eq. \eqref{eq_31} can be written as
\begin{equation}\label{eq_32}
    \dot{{\bb{e}}}_r = \bb{A}{\bb{e}}_r + \bb{B} \Delta\bb{\tau},
\end{equation}
where $\bb{B} =\bb{K}_d^{-1}(-\bb{K}_p+\bb{K}_d\bb{X}^{-1} \bb{M}^{-1})$, and $\bb{A}=-\bb{K}_d^{-1}(\bb{I}+\bb{K}_p)$. Proposition \ref{pro_1} shows that the interaction torque estimation error $\Delta\bb{\tau}$ can converge to zero with exponential rate, i.e., $\|\Delta\bb{\tau}\|\leq C e^{-\alpha t}$. Here, $\alpha = \lambda_{min}(\bb{\Gamma})/(2\varphi_1\|\bb{X}\|^2)$ and $C$ is constant. Therefore, we can conclude from the theorem below that the tracking error in interaction torque converges.

\begin{theorem}[Exponential convergence]\label{the_2}
The proposed controller $\bb{u}$ in Eq. \eqref{eq_control} ensures the interaction torque tracking error $\bb{e}_r$ can converge to zero at a minimum exponential exponential convergence rate $\min \{\alpha,\lambda_{min}(\bb{K}_d^{-1}(\bb{I}+\bb{K}_p))\}$ under the system described in Eqs. \eqref{eq_2_new}-\eqref{eq_28}, when the conditions of Proposition \ref{pro_1} holds and $\bb{K}_p$ and $\bb{K}_d$ are positive constant diagonal matrices.
\end{theorem}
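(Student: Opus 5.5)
The plan is to treat Eq.~\eqref{eq_32} as a linear time-invariant system in $\bb{e}_r$, driven by an exponentially decaying input $\bb{B}\Delta\bb{\tau}$, and solve it by variation of constants.

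\textbf{Step 1: the homogeneous part.} Since $\bb{K}_p$ and $\bb{K}_d$ are positive constant diagonal matrices, $\bb{A}=-\bb{K}_d^{-1}(\bb{I}+\bb{K}_p)$ is diagonal with strictly negative entries $-(1+k_{p,i})/k_{d,i}$. Set $\beta=\lambda_{min}(\bb{K}_d^{-1}(\bb{I}+\bb{K}_p))>0$. Because $\bb{A}$ is diagonal, hence normal, we get the bound without any condition-number constant:
\begin{equation*}
\|e^{\bb{A}t}\|\le e^{-\beta t}.
\end{equation*}

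\textbf{Step 2: the input matrix.} The matrix $\bb{B}=\bb{K}_d^{-1}(-\bb{K}_p+\bb{K}_d\bb{X}^{-1}\bb{M}^{-1}(\bb{q}))$ depends on time through $\bb{q}$, so it is not constant. We still need it to be uniformly bounded, $\|\bb{B}\|\le b$. This requires a lower bound on $\bb{M}(\bb{q})$, i.e.\ a uniform bound on $\|\bb{M}^{-1}\|$. Positive definiteness of $\bb{M}$ together with bounded entries over the (compact) joint range gives $\lambda_{min}(\bb{M})\ge m_0>0$. I will state this as a standing assumption implicit in property 1, since only the upper bound $\varphi_1$ is given explicitly.

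\textbf{Step 3: variation of constants.} Write
\begin{equation*}
\bb{e}_r(t)=e^{\bb{A}t}\bb{e}_r(0)+\int_0^t e^{\bb{A}(t-s)}\bb{B}(s)\Delta\bb{\tau}(s)\,ds.
\end{equation*}
Using $\|\Delta\bb{\tau}\|\le Ce^{-\alpha t}$ from Proposition~\ref{pro_1}, this gives
\begin{equation*}
\|\bb{e}_r(t)\|\le e^{-\beta t}\|\bb{e}_r(0)\|+bC\int_0^t e^{-\beta(t-s)}e^{-\alpha s}\,ds.
\end{equation*}

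\textbf{Step 4: evaluating the integral.} If $\alpha\neq\beta$, the integral equals $(e^{-\alpha t}-e^{-\beta t})/(\beta-\alpha)$, which is at most $e^{-\min\{\alpha,\beta\}t}/|\beta-\alpha|$. If $\alpha=\beta$, the integral equals $te^{-\beta t}$. This gives $\|\bb{e}_r(t)\|\le C' e^{-\min\{\alpha,\beta\}t}$, which is the claimed rate.

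\textbf{Alternative route.} One could instead use a Lyapunov function $V=\tfrac12\bb{e}_r^T\bb{e}_r$. Then $\dot V\le-\beta\|\bb{e}_r\|^2+b\|\bb{e}_r\|Ce^{-\alpha t}$, and a comparison lemma on $\sqrt{2V}$ gives the same estimate. I prefer the explicit convolution above because it shows the rate directly.

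\textbf{Main obstacle.} The delicate point is the equality case $\alpha=\beta$. There the bound is $te^{-\beta t}$, which is exponential only at any rate strictly smaller than $\min\{\alpha,\beta\}$. I would handle this by interpreting ``minimum rate'' in the sense of the bound $C'e^{-\min\{\alpha,\beta\}t}$ for $\alpha\ne\beta$, and by noting that in the resonant case every rate $\min\{\alpha,\beta\}-\epsilon$ is achieved. A secondary point is that Eq.~\eqref{eq_32} itself relies on condition 2 of Proposition~\ref{pro_1} ($\dot{\bb{\tau}}_{int}\approx0$). The theorem therefore inherits that approximation; I will take Eq.~\eqref{eq_32} as exact and not re-derive it.
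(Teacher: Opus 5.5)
Your proof is correct and reaches the same rate as the paper by a different route. The paper uses a Lyapunov argument. It takes $\bb{P}=-\bb{A}^{-1}/2$, which solves $\bb{A}^T\bb{P}+\bb{P}\bb{A}=-\bb{I}$, and sets $V=\bb{e}_r^T\bb{P}\bb{e}_r$. This gives $\dot V\le-\|\bb{e}_r\|^2+D\|\bb{e}_r\|e^{-\alpha t}$ with $D=2C\|\bb{P}\bb{B}\|$. It then passes to $W=\sqrt{V}$ to get the scalar comparison inequality $\dot W\le-\xi W+Ge^{-\alpha t}$ with $\xi=1/(2\lambda_{max}(\bb{P}))$, solves this for $\xi\neq\alpha$, and identifies $\xi=\lambda_{min}(\bb{K}_d^{-1}(\bb{I}+\bb{K}_p))$ from the diagonal form of $\bb{P}$.

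You instead solve the linear system explicitly by variation of constants and bound the convolution using $\|e^{\bb{A}t}\|\le e^{-\beta t}$. Your version is shorter and gives the explicit constant $\|\bb{e}_r(0)\|+bC/|\beta-\alpha|$. It also avoids the round trip through $\lambda_{min}(\bb{P})$ and $\lambda_{max}(\bb{P})$. It does rely on $\bb{A}$ being constant, whereas the Lyapunov template would extend to a time-varying or nonlinear homogeneous part.

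Your alternative with $V=\tfrac12\bb{e}_r^T\bb{e}_r$ is the paper's argument with identity weighting. That suffices here because $\bb{A}$ is symmetric, so $\bb{e}_r^T\bb{A}\bb{e}_r\le-\beta\|\bb{e}_r\|^2$ and the Lyapunov-equation step is unnecessary.

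Both of your caveats are points the paper passes over silently:
\begin{itemize}
\item The paper treats $\|\bb{P}\bb{B}\|$ as a constant even though $\bb{B}$ contains $\bb{M}^{-1}(\bb{q}(t))$. This needs exactly the uniform lower bound on $\bb{M}$ that you add, since property 1 only supplies the upper bound $\varphi_1$. It is harmless in the experiments, where $\bb{M}$ is constant.
\item The paper simply excludes $\xi=\alpha$. In that case, as you note, only $\min\{\alpha,\beta\}-\epsilon$ is attained.
\end{itemize}

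One small correction: the uniform lower bound on $\lambda_{min}(\bb{M})$ comes from continuity of $\bb{M}(\bb{q})$ on a compact joint range, not from bounded entries. Since you adopt it as an explicit assumption, this does not affect the argument.
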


The proof can be found in Appendix B. 
This theoretically guarantees that the tracking error will ultimately converge to zero. The above hierarchical framework establishes the mapping from the personalized interaction torque $\bb{\tau}_{int}$ to the motor torque $\bb{u}$. In the following, we leverage this control framework to learn the personalized parameters via the proposed preference-based optimization algorithm, that is, elaborating on how personalized parameters $\bb{x}$ are learned via \texttt{PbBO}.
	\section{Efficient Preference-based Bayesian Optimization Method} \label{method}
This part first provides the basic notations for preference learning and Bayesian Optimization (BO) and summarizes the main problems we aim to solve. Then, we introduce a preference-based Gaussian Process (GP) regression model, following previous work \cite{chu2005preference,tucker2020preference,biyik2024active}. Then, we illustrate how to incorporate BO's optimization principles into preference learning and propose an efficient, fast preference-based Bayesian optimization algorithm. Finally, the theoretical analysis is provided to guarantee the effectiveness of the proposed optimization method.

\begin{figure}[!t]
\centering
\renewcommand{\algorithmicrequire}{\textbf{Input:}}
\renewcommand{\algorithmicensure}{\textbf{Output:}}
\removelatexerror
\begin{algorithm}[H]
\caption{Bayesian Optimization}
\label{alg1}

\begin{algorithmic}[1] 
\REQUIRE kernel function $\kappa(\bb{x},\bb{x}')$, offline buffer $\mathcal{D}=\emptyset$.
\ENSURE  the optimal parameters $\bb{x}^*=\bb{x}_{t_{iter}}$.
\FOR {$t=1,2,\cdots, t_{\text{iter}} $}
\STATE Select $\bb{x}_{t}=\arg\max \mu_{t-1}(\bb{x})+{\beta_{t-1}}\sigma_{t-1}(\bb{x})$.
\STATE Observe $y_t=f(\bb{x}_{t})+n_t, n_t\sim \mathcal{N}(0,\sigma_{n}^2)$, and add $(\bb{x}_t,y_t)$ to offline dataset $\mathcal{D}$.
\STATE Update mean $\mu_t$ and covariance $\sigma_t$ through $\mathcal{D}$.
\ENDFOR
\end{algorithmic}
\end{algorithm}
\end{figure}

\subsection{Problem Formulation}
In recent years, many studies have focused on the personalized, comfortable assistance provided by exoskeletons. They proposed different optimization algorithms to solve the optimal control parameters, that is 
\begin{equation}
\bb{x}^{\star} = \arg\max_{\bb{x}\in \chi} f(\bb{x}),
\end{equation}
where $\bb{x}\in \mathcal{R}^d$ is the control parameters that need to be optimized, $\chi$ is the space of control parameters $\bb{x}$, and $f:\chi \xrightarrow{}\mathbb{R}$ is the objective function, which can be defined through metabolic rate or assistance effectiveness. Bayesian optimization (BO), applied in HILO, has been widely used to solve the above optimization problem.

BO optimizes the control parameters by continuously collecting paired data. Algorithm \ref{alg1} shows the basic optimization steps of BO. BO selects the next control scheme $\bb{x}_t$ by maximizing the acquisition function based on the observed dataset. Then, the next objective value can be obtained through $f(\bb{x}_t)$. Furthermore, the posterior distribution can be updated based on the current observation. Through continuous iteration, the obtained value $\bb{x}_t$ gradually approaches the optimal value $\bb{x}^*$. However, the objective function $f$ is difficult to describe precisely in mathematical terms, such as the comfort of assistance and the preference for motion. Instead, the human preference can be easily obtained. Subjects are only required to choose the control scheme that best aligns with their motion preferences. Preference-based learning is an effective way to establish a connection between the objective function $f$ and the preference dataset $\mathcal{D}$.

The mathematical description of the preference dataset is as follows: Given a pair of control parameters $(\bb{x}_0,\bb{x}_1)$, humans choose which control scheme is preferred, i.e., $p\in\{-1,1\}$. The preference label $p=1$ indicates $\bb{x}_0\succ\bb{x}_1$ and $p = -1$ indicates $\bb{x}_1\succ\bb{x}_0$, where $\bb{x}_i\succ\bb{x}_j$ denotes that the control scheme $i$ is preferable to the control scheme $j$. Through constant human-robot interaction, the preference dataset $\mathcal{D}=\{\bb{x}_{i,0},\bb{x}_{i,1},p_i\}_{i=1}^m$ can be collected. 

This paper focuses on user preferences as the optimization objective to enhance the efficiency and comfort of exoskeleton assistance. The optimal control parameters $\bb{x}^*$ vary significantly across subjects and task conditions. The associated parameter optimization process imposes considerable time requirements, which may induce subject fatigue and limit practical implementation. BO is an efficient optimization method that can perform human-in-the-loop optimization tasks, but it requires a well-defined objective function. Therefore, this paper aims to address the following research challenge: how to effectively integrate preference learning with Bayesian optimization to develop an efficient preference-based optimization framework, thereby achieving rapid parameter optimization with minimal human-robot interactions.

\subsection{Preference-based Gaussian Process Model}
We aim to learn the latent objective function $f$ (latent human preference) based on the preference dataset $\mathcal{D}=\{\bb{x}_{i,0},\bb{x}_{i,1},p_i\}_{i=1}^m$, where $m$ denotes the number of user feedback and is the number of iterations in the optimization process. Then, let $\chi 
\in \mathcal{R}^d$ be the finite set of available control parameters. The cardinality of the finite set $\chi$ can be denoted as $|\chi|=N=2m$. For finite control schemes, the objective values can be written in the vector form: $\bb{f}=[f(\bb{x}_1),f(\bb{x}_2),...,f(\bb{x}_N)]^T$. Based on Bayesian theorem, the posterior probability of $\bb{f}$ can then be written as
\begin{equation}\label{eq_2}
    P(\bb{f}~|~\mathcal{D})\propto P(\bb{f}) P(\mathcal{D}~|~\bb{f}).
\end{equation}
where $P(\bb{f})$ is the prior probability and $P(\mathcal{D}|\bb{f})$ is the likelihood, which can be seen as the joint probability of observed data. The prior probability of these latent function values $\bb{f}$ can be viewed as a multivariate Gaussian, that is
\begin{equation}\label{eq_3}
  P(\bb{f}~|~ \bb{\mu}, \bb{\Sigma})=\frac{\exp{\left(-\frac{1}{2}\left(\bb{f}-\bb{\mu}\right)^T   \bb{\Sigma}^{-1}   \left(\bb{f}-\bb{\mu}\right)\right)}}{\left(2 \pi\right)^{N/2}\left|\bb{\Sigma}\right|^{1/2}},  
\end{equation}
where $\bb{\mu}\in \mathcal{R}^N$ and $\bb{\Sigma}\in \mathcal{R}^{N \times N}$ are the mean vector and the covariance matrix of the GP distribution for the $N$ items in control space. $\bb{\Sigma}$ is the $N\times N$ covariance matrix, where the $(i,j)$-th element is given by the covariance function $\kappa(\bb{x}_i,\bb{x}_j)$. Here, the modified radial basis function (RBF) kernel is used to approximate $\kappa(\bb{x}_i,\bb{x}_j)$ \cite{biyik2024active}: 
\begin{equation}\label{eq_4}
\begin{aligned}
       &\kappa(\bb{x}_i,\bb{x}_j) = \exp \left(-\theta\|\bb{x}_i-\bb{x}_j\|^2_2\right)- \bar{\kappa}(\bb{x}_i,\bb{x}_j)\\
       &\bar{\kappa}(\bb{x}_i,\bb{x}_j)=\exp\left(-\theta\|\bb{x}_i-\bb{x}_0\|^2_2-\theta\|\bb{x}_j-\bb{x}_0\|^2_2\right),
\end{aligned}
\end{equation}
where $\theta$ is a hyperparameter that controls the smoothness of sample paths, and $\bb{x}_0$ is the initial point. The modified term $\bar{\kappa}(\bb{x}_i,\bb{x}_j)$ is applied to measure the relative difference between two control parameters. Given preference dataset $\mathcal{D}$, we assume feedback may be disturbed by i.i.d. Gaussian noise: $y(\bb{x}_{t})=f(\bb{x}_{t})+n_t, n_t\sim \mathcal{N}(0,\sigma_{n}^2)$. Then, following the previous work \cite{chu2005preference}, we have
\begin{equation}\label{eq_5}
\begin{aligned}
        P\left(\bb{x}_0\succ \bb{x}_1~|~ \bb{f}\right)&=P\left(y(\bb{x}_{0})>y(\bb{x}_{1})~|~ f(\bb{x}_{0}),f(\bb{x}_{1})\right)\\
        &=\Phi\left[\frac{f(\bb{x}_{0})-f(\bb{x}_{1})}{\sqrt{2}\sigma_n}\right],
\end{aligned}
\end{equation}
where $\Phi$ is the cumulative distribution function of the standard normal. Thus, the joint probability of the preference dataset given the latent function $\bb{f}$ can be expressed as a product of the likelihood defined in Eq. \eqref{eq_5}, that is
\begin{equation}\label{eq_6}
    P\left(\mathcal{D}~|~ \bb{f}\right)=\prod_{i=1}^{m}\Phi\left[p_i\cdot\frac{f(\bb{x}_{i,0})-f(\bb{x}_{i,1})}{\sqrt{2}\sigma_n}\right].
\end{equation}

Then, substituting Eqs. \eqref{eq_3} and \eqref{eq_6} into Eq. \eqref{eq_2}, we can conclude the final model of preference-based GP. However, in Eq. \eqref{eq_2}, this posterior no longer follows a GP distribution and is difficult to analyze. Here, similar to \cite{tucker2020preference}, we approximate the posterior using the Laplace approximation, representing it as a multivariate Gaussian distribution, that is $P(\bb{f}|\mathcal{D})\sim \mathcal{N}(\bb{\mu}_{pos},\bb{\Sigma}_{pos})$. 
The core of the Laplace approximation is to approximate the posterior mean $\bb{\mu}_{pos}$ and posterior covariance $\bb{\Sigma}_{pos}$ using first- and second-order Taylor expansions. 
Then, the posterior mean $\bb{\mu}_{pos}$ can be estimated through maximizing the posterior distribution (maximum a posterior (MAP)), that is $\bb{\mu}_{pos}=\arg\max_{\bb{f}}P(\bb{f}|\mathcal{D})$, which is equivalent to minimize the following equation:
\begin{equation}\label{eq_7}
    \arg\min_{\bb{f}} ~L(\bb{f})=-\log P(\bb{f}~|~ \bb{\mu}, \bb{\Sigma}) - \log P\left(\mathcal{D}~|~ \bb{f}\right).
\end{equation}

Then, we assume that the prior mean of $\bb{f}$ is the zero function. Then, combining Eqs. \eqref{eq_3}, \eqref{eq_6} and \eqref{eq_7} together, the minimization of Eq. \eqref{eq_7} can be written as
\begin{equation}\label{eq_8}
    -\sum_{i=1}^{m}\log \Phi\left[p_i\cdot\frac{f(\bb{x}_{i,0})-f(\bb{x}_{i,1})}{\sqrt{2}\sigma_n}\right]+\frac{1}{2}\bb{f}^T \bb{\Sigma}^{-1}\bb{f}.
\end{equation}
The detailed optimization is provided in Appendix A. Then,
for posterior covariance $\bb{\Sigma}_{pos}$, Chu \e \cite{chu2005preference} derived the final form $\bb{\Sigma}_{pos} = (\bb{\Sigma}^{-1}+\mathbf{W})^{-1}$, where $\mathbf{W} \in \mathcal{R}^{N\times N}$ is the negative Hessian of the log-likelihood, where the $(i,j)$-th entry of $\mathbf{W}$ can be denoted as
\begin{equation}\label{eq_9}
    W_{i,j} = -\left.\frac{\partial^2 \log P\left(\mathcal{D}~|~ \bb{f}\right)}{\partial f(\bb{x}_i)\partial f(\bb{x}_j)}\right|_{\bb{f}=\bb{\mu}_{pos}}.
\end{equation}

At this point, we understand how to use the collected preference dataset $\mathcal{D}$ to approximate the posterior mean and covariance for the Laplace approximation, as described in Eq. \eqref{eq_2}. This allows us to establish and update the preference model (i.e., the latent function $\bb{f}$) based on the preference data. However, our ultimate goal is not merely to establish a preference model but to derive optimal preference parameters (control parameters) from it. Therefore, in the next part, we illustrate how to incorporate the preference model into Bayesian optimization to optimize these parameters.

\subsection{Preference-based Bayesian Optimization Algorithm}
In the above preference-based GP model, the hyperparameters $\theta$ and $\sigma_n$ should be determined before the optimization process. The matrices $\bb{\mu}$, $\bb{\Sigma}$, and $\mathbf{W}$ are updated at each iteration, after which the preference parameters are optimized. For simplicity, after $t$ iterations, we denote the preference dataset as $\mathcal{D}_t=\{\bb{x}_{i,0},\bb{x}_{i,1},p_i\}_{i=1}^t$, the value of preference data for latent function as $\bb{f}_t=\{f(x_{i,0}),f(x_{i,1})\}_{i=1}^t\in \mathcal{R}^{2t}$, and the corresponding mean, covariance and hessian matrices as $\bb{\mu}_t\in \mathcal{R}^{2t}$, $\bb{\Sigma}_t\in \mathcal{R}^{2t\times 2t}$ and $\mathbf{W}_t \in \mathcal{R}^{2t\times2t}$, respectively. Given the arbitrary preference data $\{\bb{x}_0,\bb{x}_1,p\}$, the $\bb{\mu}_t$, $\bb{\Sigma}_t$ and $\mathbf{W}_t$ can be updated through Eq. \eqref{eq_7}, Eq. \eqref{eq_4} and Eq. \eqref{eq_9}, respectively. 

However, the objective of this paper is to determine the optimal control parameter $\bb{x}$ using the latent function $f$. The key challenge is selecting the next data pair $(\bb{x}_{0},\bb{x}_{1})$ for preference collection to efficiently identify the optimal parameters $\bb{x}^*$. Here, we first consider how to select next value $\bb{x}_{t+1}$ based on the learned mean, covariance and hessian matrices $\bb{\mu}_t$, $\bb{\Sigma}_t$ and $\mathbf{W}_t$. We denote the value at arbitrary point $\bb{x}_{t+1}$ as $f_{t+1}=f(\bb{x}_{t+1})$. Then, the joint distribution of $\bb{f}_t$ and $f_{t+1}$ satisfies:
\begin{equation}\label{eq_10}
\left[\begin{array}{c}
\bb{f}_t\\
f_{t+1}
\end{array}\right]
=
\mathcal{N}\left(\mathbf{0},
\left[\begin{array}{cc}
\bb{\Sigma}_t&\bb{k}_t\\
\bb{k}_t^T & \kappa(\bb{x}_{t+1},\bb{x}_{t+1})
\end{array}\right]\right),
\end{equation}
where $\bb{k}_t=[\kappa(\bb{x}_{t+1},\bb{x}_{i,0}),\kappa(\bb{x}_{t+1},\bb{x}_{i,1})]_{i=1}^t\in \mathcal{R}^{2t}$ and $\kappa$ is defined in Eq. \eqref{eq_4}. The prediction distribution of $\bb{x}_{t+1}$ can be written as $P(f_{t+1}|\bb{f}_t, \bb{x}_{t+1})=\mathcal{N}(m_t(\bb{x}_{t+1}),c_t^2(\bb{x}_{t+1}))$, where mean $m_t$ and variance $c_t$ satisfy:
\begin{equation}\label{eq_11}
    \begin{aligned}
        &m_t(\bb{x}_{t+1}) =\bb{k}_t^T \left(\bb{\Sigma}_t+\sigma_n^2\bb{I}\right)^{-1}\bb{\mu}_t, \\
        &c_t^2(\bb{x}_{t+1})= \kappa(\bb{x}_{t+1},\bb{x}_{t+1})-\bb{k}_t^T\left(\bb{\Sigma}_t+\mathbf{W}_t^{-1}\right)^{-1}\bb{k}_t.
    \end{aligned}
\end{equation}

Then, applying the confidence bound criteria of Bayesian optimization, the next point $\bb{x}_{t+1}$ can be selected through the next equation:
\begin{equation}\label{eq_12}
    \bb{x}_{t+1}=\arg\max_{\bb{x}\in \chi} m_{t}(\bb{x})+{\beta_{t}}^{1/2}c_{t}(\bb{x}),
\end{equation} 
where $\beta_t>0$ is the trade-off coefficient that balances exploration and exploitation. The larger the value of $\beta_t$, the greater the exploration. Here, for the selection of the next pair data $(\bb{x}_{t+1,0},\bb{x}_{t+1,1})$, we set different $\beta_t$ values in Eq. \eqref{eq_12} to solve $\bb{x}_{t+1,0}$ and $\bb{x}_{t+1,1}$, respectively.
To solve for $\bb{x}_{t+1,0}$, an approximate $\beta_{t,0}$ value is used to strike a better balance between exploitation and exploration, thereby increasing the speed of finding a solution. For solving $\bb{x}_{t+1,1}$, we set a relatively smaller $\beta_{t,1}$ to fully exploit the observed dataset and ensure the stability of optimization. Therefore, the next preference data pair can be achieved by:
\begin{equation}\label{eq_13}
\begin{bmatrix}
\bb{x}_{t+1,0} \\
\bb{x}_{t+1,1}
\end{bmatrix}
=
\begin{bmatrix}
\arg\max_{\bb{x}_0\in \chi} \text{ } m_{t}(\bb{x}_0)+{\beta_{t,0}}^{1/2}c_{t}(\bb{x_0}) \\
\arg\max_{\bb{x}_1\in \chi} \text{ } m_{t}(\bb{x}_1)+{\beta_{t,1}}^{1/2}c_{t}(\bb{x_1})
\end{bmatrix}.
\end{equation}
Then, the user provides feedback $p_{t+1}$ by indicating which of the two control parameter sets better aligns with their movement preferences. The preference data $(\bb{x}_{t+1,0},\bb{x}_{t+1,1},p_{t+1})$ is added to the preference dataset $\mathcal{D}$ for updating parameters $\bb{\mu}_{t+1}, \bb{\Sigma}_{t+1}$ and $\mathbf{W}_{t+1}$.

\textbf{Determination for Trade-off Coefficient $\beta_t$:} Initially, due to the inaccuracy of the latent function $f$, predictions are less precise. At this stage, a larger $\beta_t$ is used to enhance exploration, thereby accelerating the optimization process. As the volume of preference data increases and the preference model becomes more accurate, the $\beta_t$ value should be appropriately reduced to ensure optimization stability. Therefore, $\beta_t$ is configured in a decaying form, that is
\begin{equation}\label{eq_decay}
    \beta_t=\gamma \beta_{t-1}=\gamma^t \beta_{0},
\end{equation}
where $\gamma\in(0,1)$ is the decay factor, and $\beta_0$ is the hyper-parameter can be adjust for different tasks. Here, for solving $(\bb{x}_{t,0},\bb{x}_{t,1})$, the initial $\beta_0$ has different values, denoted as $\beta_{0,0}$ and $\beta_{0,1}$, respectively. Algorithm \ref{alg2} shows the pseudo-code of the preference-based Bayesian optimization (\texttt{PbBO}) method. The sampling distribution $\omega$ used for data collection is important for optimization efficiency. The most current optimization methods apply a random sampling distribution to collect data. Although better performance can be achieved through a random distribution, optimization is often slow, and the computational cost is high. To further improve optimization efficiency, we introduce an adaptive sampling distribution $\omega$ that adjusts the sampling probability based on the estimated preference value. The details are given below.

\begin{figure}[!t]
\centering
\renewcommand{\algorithmicrequire}{\textbf{Input:}}
\renewcommand{\algorithmicensure}{\textbf{Output:}}
\removelatexerror
\begin{algorithm}[H]
\caption{Preference-based Bayesian optimization algorithm (\texttt{PbBO})}
\label{alg2}
\begin{algorithmic}[1] 
\REQUIRE kernel function $\kappa(\bb{x},\bb{x}')$, preference buffer $\mathcal{D}=\emptyset$.
\ENSURE  the optimal parameters $\bb{x}^*=\bb{x}_{t_{iter}}$.
\FOR {$t=1,2,\cdots, t_{\text{iter}} $}
\STATE Collect $n$ samples $\mathbf{D}\in \mathcal{R}^{n\times d}$ from parameter space $\chi\in \mathcal{R}^{d}$ according to sampling distribution $\omega_{t-1}$.
\STATE Select next preference data pair $[\bb{x}_{t,0},\bb{x}_{t,1}]$ from $\mathbf{D}$ through optimizing Eq. \eqref{eq_13}.
\STATE Run control schemes $[\bb{x}_{t,0},\bb{x}_{t,1}]$ and obtain preference $p_t$, and add $(\bb{x}_{t,0},\bb{x}_{t,1},p_{t})$ to preference dataset $\mathcal{D}$.
\STATE Update mean $\bb{\mu}_t$, covariance $\bb{\Sigma}_t$ and Hessian matrix $\mathbf{W}_t$ through Eq. \eqref{eq_7}, Eq. \eqref{eq_4} and Eq. \eqref{eq_9}.
\STATE Update the mean $m_t$ and variance $c_t$ of prediction distribution according to Eq. \eqref{eq_11}.
\ENDFOR
\end{algorithmic}
\end{algorithm}
\end{figure}

\textbf{Designation for Adaptive Sampling Distribution $\omega_t$:} The efficient convergence of the sampled space to the optimal solution's neighborhood is crucial for accelerating algorithmic convergence. Note that, at $t-$th iteration, the collected preference data is denoted as $\mathcal{D}_t=\{\bb{x}_{i,0},\bb{x}_{i,1},p_i\}_{i=1}^t$, and the corresponding value of latent function is $\bb{f}_t=\{f(\bb{x}_{i,0}),f(\bb{x}_{i,1})\}_{i=1}^t \in \mathcal{R}^{2t}$. Here, we use the Gaussian distribution to approximate the sampling distribution based on $\mathcal{D}_t$ and $\bb{f}_t$, that is $\omega_t \sim \mathcal{N}(\bb{\mu}_t^{s},\bb{\Sigma}_t^s)$. Since $f$ can reflect the quality of the current parameters, the larger the value, the better the parameters, so the sampling distribution should be shifted to the $\bb{x}$ corresponding to the larger $f$ value. The $\bb{f}_t$ value can be approximated by $\bb{\mu}_t$ optimized through Eq. \eqref{eq_7}. Therefore, the probability corresponding to each sample $\bb{x}$ can be expressed as
\begin{equation}
    p_i = \frac{\exp(\bb{\mu}_{t,i})}{\sum_{j=1}^{2t}\exp(\bb{\mu}_{t,j})},\quad i=1,2,...,2t.
\end{equation}
Furthermore, we can get the discrete probability distribution, that is $\mathbf{X}_t\sim \bb{P}_t$, where the random variable is $\mathbf{X}_t=[\bb{x}_{1,0},\bb{x}_{1,1},...,\bb{x}_{t,0},\bb{x}_{t,1}]$ and the corresponding probability is $\bb{P}_t=[p_1,p_2,...,p_{2t}]$. Then, we use maximum likelihood estimation to estimate the mean $\bb{\mu}_t^s$ and covariance $\bb{\Sigma}_t^s$ of the sampling distribution, which can be denoted as
\begin{equation}
\begin{aligned}
        \bb{\mu}_t^s &= \sum_{i=1}^{2t} \bb{P}_{t,i}\mathbf{X}_{t,i},\\
        \bb{\Sigma}_t^s &= \sum_{i=1}^{2t} \bb{P}_{t,i}\left(\mathbf{X}_{t,i}-\bb{\mu}_t^s\right)\left(\mathbf{X}_{t,i}-\bb{\mu}_t^s\right)^T,
\end{aligned}
\end{equation}

At each update iteration, $N_s$ points are sampled from the optimization space using the designed sampling distribution $\omega_t$. Moreover, to prevent the algorithm from falling into a local optimum, in addition to collecting samples from the above sampling distribution, a portion of the samples is randomly drawn from the sample space. The proportion of samples sampled from $\omega_t$ and randomly sampled is $f_s$. To ensure the safety of the next generation preference, we restrict the scope of the variable $\bb{x}$, that is $\texttt{clip}(\bb{x},\texttt{bounds})$, where \texttt{bounds} is the safe region set up artificially from the beginning. In summary, we have explained how to combine preference learning with Bayesian optimization and how to improve the stability and efficiency of the optimization algorithm.

\subsection{Theoretical Analysis}
This section analyzes the regret bound (convergence rate) of the proposed optimization algorithm from a theoretical perspective. Firstly, we introduce information gain in BO following \cite{GP2012}. At round $T$, given the observed vector $\bb{y}_T$ at the points \( S_T = \{\bb{x}_{1,0}, \bb{x}_{1,1}, \dots, \bb{x}_{T,0},\bb{x}_{T,1}\} \subset \chi \), the informativeness of the set $S_T$ about $f$ can be measured by information gain $I(\bb{y}_T;\bb{f}_T)$, where \( \bb{f}_{T} =\{f(\bb{x}_{i,0}), f(\bb{x}_{i,1})\}_{i=1}^T\), \( \bb{y}_{T}=\bb{f}_T+\bb{n}_T \), and $\bb{n}_T \sim \mathcal{N}(0,\sigma_n^2\bb{I})$. The information gain $I(\bb{y}_T;\bb{f}_T)$ is the mutual information between \( \bb{f}_{T}\) and \( \bb{y}_{T}\), which  can be expressed in terms of the predictive variances $c^2_{t-1}$ updated through Eq. \eqref{eq_11}:
\begin{equation}\label{eq_17}
    \frac{1}{2}\sum_{t=1}^T\log \left(1+\frac{c^2_{t-1}(\bb{x}_{t,0})}{\sigma_n^{2}}\right)\left(1+\frac{c^2_{t-1}(\bb{x}_{t,1})}{\sigma_n^{2}}\right).
\end{equation}

The $I(\bb{y}_T;\bb{f}_T)$ can quantify the reduction in uncertainty about $\bb{f}$ from revealing $\bb{y}$. The detailed derivation can be found in Appendix A. Then, the maximum information gain $\mathcal{H}_T$, which quantifies how many observation points $\bb{x}$ we need to fully describe the objective function $f$ during the optimization process, can be denoted as
\begin{equation}\label{eq_18}
    \mathcal{H}_T = \max_{S_T\subset \chi} I(\bb{y}_T;\bb{f}_T),
\end{equation}
where $\mathcal{H}_T$ measures the complexity of the reproducing kernel Hilbert space (RKHS), which describes the expressiveness of the objective function $f$ under a given kernel function $\kappa$. For the proposed optimization algorithm, we define the instantaneous regret $r_t$ for the preference data pair  $[\bb{x}_{t,0},\bb{x}_{t,1}]$ in round $t$,  which can be defined as  
\begin{equation}\label{eq_19}
    r_t=2f(\bb{x}^*)-f(\bb{x}_{t,0})-f(\bb{x}_{t,1}).
\end{equation}
Then, the cumulative regret $R_T$ after $T$ rounds is defined as the sum of regrets, that is, $R_T=\sum_{t=1}^T r_t$. The asymptotic property of an ideal optimization algorithm is no regret, that is $\lim_{T \to \infty} R_T/T=0$ \cite{srinivas2010gaussian}. Based on the information gain, we derive the below theorem for describing the regret bound of the proposed algorithm.

\begin{theorem}[Cumulative regret bound]\label{the_1}
For any $T$ and $\delta\in(0,1)$, the cumulative regret bound $R_T$ of Algorithm \ref{alg2} satisfies the below equation with probability at least $1-\delta$:
\begin{equation}
    R_T=\mathcal{O}\left(\sqrt{\beta_0(T)T\mathcal{H}_T}\right),
\end{equation}
\end{theorem}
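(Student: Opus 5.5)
The plan is to follow the GP-UCB analysis of Srinivas et al.\ \cite{srinivas2010gaussian}, applied separately to the two acquisition rules in Eq.~\eqref{eq_13}. The reduction is to treat the latent preference function $f$ as a sample from the GP prior with kernel $\kappa$ in Eq.~\eqref{eq_4}. Under this reduction, the Laplace-approximated posterior supplies, at each round, a mean $m_{t-1}$ and a standard deviation $c_{t-1}$ as in Eq.~\eqref{eq_11}. We then read $\beta_0(T)$ as the usual confidence radius at the largest level used; since $\beta_{t,j}=\gamma^t\beta_{0,j}\le\beta_{0,j}$, every per-round coefficient is bounded by $\beta_0(T):=\max\{\beta_{0,0},\beta_{0,1}\}$. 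For finite $\chi$, this should be inflated to the usual $2\log(|\chi|\pi_t^2/(6\delta))$ scaling so that the concentration event holds.

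\textbf{Step 1 (confidence event).} For finite $\chi$, Gaussianity of $f(\bb{x})$ given $\mathcal{D}_{t-1}$ and a union bound over $\bb{x}\in\chi$ and $t\ge1$, with $\sum_t \pi_t^{-1}=1$, give the following with probability at least $1-\delta$:
\begin{equation*}
|f(\bb{x})-m_{t-1}(\bb{x})|\le \beta_0(T)^{1/2}c_{t-1}(\bb{x})\quad \forall \bb{x}\in\chi,\ t\ge1 .
\end{equation*}
For continuous $\chi$, one would discretize using a Lipschitz argument, as in Lemma~5.7 of \cite{srinivas2010gaussian}. The adaptive sampling distribution $\omega_t$ only restricts the argmax in Eq.~\eqref{eq_13} to the sampled set $\mathbf{D}$. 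I would handle this by assuming that $\bb{x}^*$, or a point within the discretization error of it, is in $\mathbf{D}$; the random fraction $f_s$ of uniform samples makes this hold with high probability. Alternatively, one can state the bound relative to the best point of $\mathbf{D}$.

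\textbf{Step 2 (per-round regret).} Split $r_t=[f(\bb{x}^*)-f(\bb{x}_{t,0})]+[f(\bb{x}^*)-f(\bb{x}_{t,1})]$. For each $j\in\{0,1\}$, the UCB choice gives $m_{t-1}(\bb{x}_{t,j})+\beta_{t,j}^{1/2}c_{t-1}(\bb{x}_{t,j})\ge m_{t-1}(\bb{x}^*)+\beta_{t,j}^{1/2}c_{t-1}(\bb{x}^*)$. This is the main obstacle: with a decaying $\beta_{t,j}$ that is smaller than the confidence radius, the optimistic bound $f(\bb{x}^*)\le \mathrm{UCB}_t(\bb{x}^*)$ no longer follows from Step 1. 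I would first handle this by noting that the confidence radius is $\beta_0(T)$ and bounding
\begin{equation*}
f(\bb{x}^*)-f(\bb{x}_{t,j})\le 2\beta_0(T)^{1/2}c_{t-1}(\bb{x}_{t,j})+\beta_0(T)^{1/2}c_{t-1}(\bb{x}^*).
\end{equation*}
The extra term is controlled by $c_{t-1}(\bb{x}^*)\le$ the prior variance bound $\le1$. If this is too lossy, the fallback is to assume $\beta_{t,j}$ dominates the confidence radius, i.e.\ to interpret $\beta_0(T)$ as a schedule that is nondecreasing in $T$. This recovers the clean bound $r_{t,j}\le 2\beta_0(T)^{1/2}c_{t-1}(\bb{x}_{t,j})$. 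I would present the proof in this latter form, absorbing constants into $\mathcal{O}(\cdot)$.

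\textbf{Step 3 (sum via information gain).} By Cauchy--Schwarz, $R_T^2\le 2T\sum_{t}\sum_{j}r_{t,j}^2\le 8T\beta_0(T)\sum_t\big(c^2_{t-1}(\bb{x}_{t,0})+c^2_{t-1}(\bb{x}_{t,1})\big)$. Since $c^2_{t-1}\le1$, the inequality $s\le \frac{1}{\sigma_n^{2}}\cdot\frac{\log(1+s)}{\log(1+\sigma_n^{-2})}\sigma_n^2$ applied to $s=c^2/\sigma_n^2$ gives $c^2\le C_1\log(1+c^2/\sigma_n^2)$ with $C_1=1/\log(1+\sigma_n^{-2})$. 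Summing the two terms turns the product form of Eq.~\eqref{eq_17} into a sum of logs. This yields $\sum_t(\cdot)\le 2C_1 I(\bb{y}_T;\bb{f}_T)\le 2C_1\mathcal{H}_T$ by Eq.~\eqref{eq_18}. Hence $R_T\le\sqrt{16C_1T\beta_0(T)\mathcal{H}_T}=\mathcal{O}(\sqrt{\beta_0(T)T\mathcal{H}_T})$ on the event of Step 1, which has probability at least $1-\delta$.
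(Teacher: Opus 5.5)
Your skeleton is the paper's. Both use a confidence bound from \cite{srinivas2010gaussian} and a per-arm UCB comparison against $\bb{x}^*$ with every coefficient bounded by $\beta_0(T)$. Both then apply Cauchy--Schwarz over the $2T$ terms and convert $\sum_t\bigl(c_{t-1}^2(\bb{x}_{t,0})+c_{t-1}^2(\bb{x}_{t,1})\bigr)$ into the information gain \eqref{eq_17}. Where you deviate, your version is the one that holds. The paper cites Lemma~5.5 of \cite{srinivas2010gaussian}, which controls $|f-m_{t-1}|$ only at the queried points, although the first inequality of its chain needs the bound at $\bb{x}^*$. Your union bound over the finite $\chi$ (Lemma~5.1-style, with $|\chi|$ in the radius) supplies that. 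More importantly, the paper fixes $\beta_0(T)$ in \eqref{eq_a.11} by matching $\sum_{t\le T}\gamma^t\beta_0$ to $\sum_{t\le T}2\log(\pi_t/\delta)$. The first sequence decreases and the second increases, so this forces $\gamma^T\beta_0(T)<2\log(\pi_T/\delta)$ for $T\ge2$. Its step $r_t\le 2\beta_{t-1,0}^{1/2}c_{t-1}(\bb{x}_{t,0})+2\beta_{t-1,1}^{1/2}c_{t-1}(\bb{x}_{t,1})$ is therefore unjustified in late rounds, which is exactly the obstacle you flagged. Likewise, the paper's use of $\log(1+x)>x/2$ needs $c_{t-1}^2/\sigma_n^2<1$, which does not follow from $c_{t-1}^2<1$ and $\sigma_n^2<1$. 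Your concavity constant $C_1=1/\log(1+\sigma_n^{-2})$ avoids this. Your displayed inequality for $s$ is garbled; it should read $s\le\frac{\sigma_n^{-2}}{\log(1+\sigma_n^{-2})}\log(1+s)$ for $0\le s\le\sigma_n^{-2}$. The conclusion you draw from it is right.

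State plainly what your fix costs. Per-round domination with $\beta_{t,j}=\gamma^t\beta_{0,j}$ requires $\beta_0(T)\ge\gamma^{-T}\cdot 2\log(|\chi|\pi_T/\delta)$. So you prove the $\mathcal{O}(\cdot)$ relation for a $\beta_0(T)$ exponentially larger than the paper's \eqref{eq_a.11}, not for the paper's own. Note that the paper's $\beta_0(T)$ already grows like $T\log T$ for $\pi_t\propto t^2$, so neither bound is sublinear.

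Drop your first option in Step 2. $\bb{x}^*$ is never queried, so $c_{t-1}(\bb{x}^*)$ need not shrink, and the extra term contributes order $T\beta_0(T)^{1/2}$.

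Your claim that the uniform part of $\mathbf{D}$ captures $\bb{x}^*$ (or a fine-grid neighbour) with high probability is unsupported. Only a $1-f_s=0.1$ share is uniform, and any failure probability must be charged to $\delta$ across rounds. Either assume $\bb{x}^*\in\mathbf{D}$ or measure regret against the best point of $\mathbf{D}$; the paper ignores this restriction altogether.

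Finally, you inherit two gaps from the paper:
\begin{itemize}
\item \eqref{eq_17} uses $c_{t-1}(\bb{x}_{t,1})$ unconditioned on the observation at $\bb{x}_{t,0}$, so it over-counts the true mutual information. The step ``$\le\mathcal{H}_T$'' then needs one more argument. An extra observation shrinks a variance by at most the factor $1+\sigma_n^{-2}$, because $\kappa(\bb{x},\bb{x})\le1$, so this costs only a constant.
\item Both arguments treat the Laplace posterior built from pairwise comparisons as if it were the exact Gaussian posterior of direct noisy evaluations of $f$. That exact Gaussian model is what the confidence lemma and \eqref{eq_17} actually presuppose.
\end{itemize}
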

where $\beta_0(T)$ is defined in Eq. (A.20). The proof can be found in Appendix B. This bound depends on the iteration round $T$, the maximum information gain $\mathcal{H}_T$, and the initial value $\beta_0(T)$. The convergence rate of the proposed algorithm is $\mathcal{O}(\sqrt{\beta_0(T)\mathcal{H}_T/T})$.

The maximum information gain, $\mathcal{H}_T$, is related to the kernel $\kappa(\cdot)$. Here, for radial basis function kernel, $\mathcal{H}_T=\mathcal{O}((\log T)^{d+1})$ \cite{srinivas2010gaussian}. Therefore, the cumulative regret bound satisfies $R_T=\mathcal{O}(\sqrt{\beta_0(T)T(\log T)^{d+1}})$. This theorem provides a regret bound for the proposed optimization based on preference learning. Compared with the regret bounds for standard GP-UCB algorithms \cite{GP2012,chowdhury2017kernelized}, the setting of $\beta_t$ differs significantly, and the bound is derived from the preference data pair.

\begin{remark}
    Note that the regret bound in Theorem \ref{the_1} holds for any $S_T$ since the maximum information gain $\mathcal{H}_T$ is achieved by maximizing the information gain $I(\bb{y}_T;\bb{f}_T)$ from $S_T\subset \chi$. For specific $S_T$, the regret bound $R_T$ of the proposed method is $\mathcal{O}(\sqrt{\beta_0(T)TI(\bb{y}_T;\bb{f}_T)})$. Here, we focus on the form of $I(\bb{y}_T|\bb{f}_T)$ defined in Eq. (A.10). The set $S_T$ is selected through Eq. \eqref{eq_13} and the sampling distribution $\omega$. As optimization progresses, the introduction of the sampling distribution $\omega$ guides samples toward the optimal region, resulting in higher sample similarity. Consequently, the elements of covariance $\bb{\Sigma}$ decrease, leading to smaller eigenvalues and, in turn, a reduction in $I$. Therefore, the actual regret bound becomes tighter after applying the sampling distribution, which can lead to a faster convergence rate.
\end{remark}
    \section{Experimental Setup}\label{experiments_setup}
This section provides experimental protocol details and the related evaluation metrics for exoskeleton assistance, and gives parameter configurations for algorithm optimization.

\begin{figure*}
	\centering
	\includegraphics[width=0.96\textwidth]{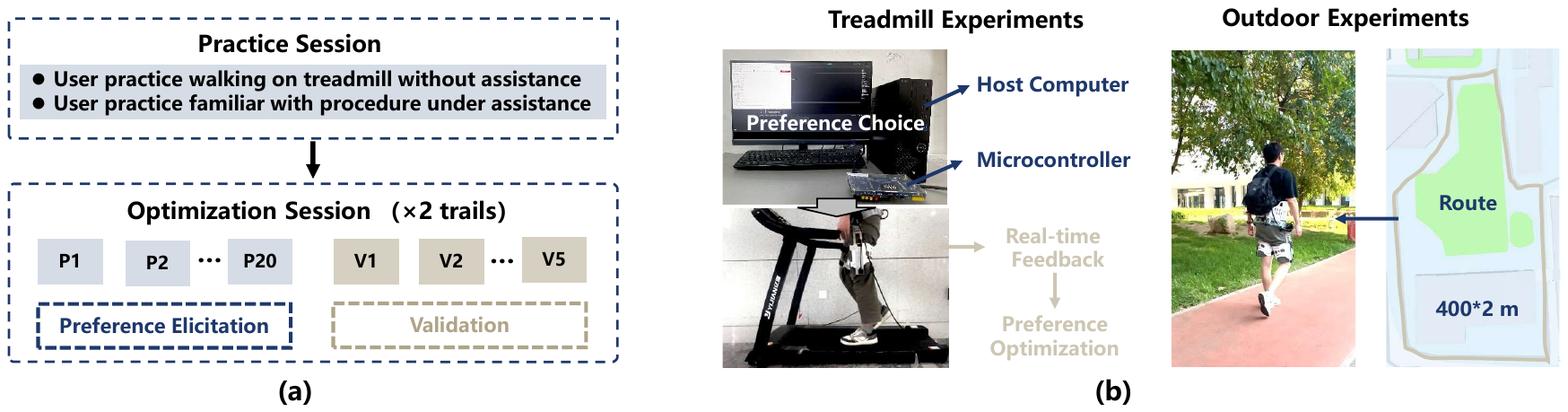}
	\caption{(a) shows the experimental protocol of preference optimization. In the first phase, users experience treadmill walking with and without exoskeleton assistance to familiarize themselves with the protocol. The second phase is an optimization stage, comprising a 20-step (P1-P20) \texttt{PbBO} process followed by a 5-step (V1-V5) validation procedure. (b) presents the scenario diagrams of the indoor treadmill experiment and the outdoor real-world environment.}. 
	\label{fig_procedure}
\end{figure*}
 
\begin{table}
	\normalsize
	\caption{Physical characteristic information of each subject.}
	\label{tab_subject}
	\centering
	\resizebox{7.2cm}{!}{
		\begin{tabular}{ccccc}
\toprule[1pt] 
Subject ID& Age & Gender &Height (m)& Mass (kg)\\

\midrule
AB01& $24$ & M &$1.83$& $77.6$\\
AB02& $27$ & M &$1.79$& $76.0$\\
AB03& $23$ & M &$1.71$& $72.0$\\
AB04& $26$ & F &$1.60$& $60.2$\\
AB05& $27$ & M &$1.77$& $74.5$\\
\bottomrule[1pt]
\end{tabular}
}
\end{table}

\subsection{Experimental Protocol} \label{dataset}
We recruited $5$ healthy, able-bodied participants, including $4$ males and $1$ female, with an average age of $25.4$ years, height of $1.74 \text{ m}$, and weight of $72.0 \text{ kg}$ (see Table \ref{tab_subject}). All participants provided written informed consent (approval number: IA-2502-020403) and are familiar with the hip exoskeleton from prior study protocols. Participants engage in overlapping experimental tasks, with $5$ individuals performing control parameter optimization on an indoor treadmill and 2-3 participants conducting validation trials in the treadmill and outdoor setting to assess exoskeleton performance under real-world conditions.

\subsubsection{\textbf{Preference Optimization Experiments}} The protocol comprises two phases similar to \cite{lee2023user,arens2025preference}: a practice session, and two optimization trials (See Fig. \ref{fig_procedure} (a)). The optimization framework is re-run for each trial. The practice session is designed to familiarize participants with treadmill walking while wearing the underpowered hip exoskeleton and to walk with powered torque assistance until they are familiar with the experimental procedure. For two optimization trials, each comprises preference elicitation and a validation session. During preference elicitation, participants iteratively selected between two control parameter settings (A/B) across $20$ generations. After the $20$th iteration, participants automatically proceed to a validation phase without explicit notification. This phase consisted of two additional comparisons, where the inferred optimal controller option is paired against randomly generated parameters.

Moreover, for preference selection, if a participant reported indifference, the algorithm processes this as two opposing preference votes (one for each option), ensuring the mean utility estimate remained unchanged while still reducing model uncertainty. This approach allows the acquisition function to update, typically leading to new option pairs in subsequent iterations. In the indoor treadmill experiments, each participant is required to complete three tasks at different walking speeds: $0.6, 1.0, 1.4\text{ m/s}$. Each task requires the above optimization trail to be followed. To mitigate fatigue effects on preference selection, participants receive $5$-minute breaks every $15$ minutes of walking and between sessions. The protocol limited the total daily experimental duration to under $2$ hours.

\subsubsection{\textbf{Validation Experiments}} Validation experiments are performed in both indoor treadmill and outdoor real-world settings. Fig. \ref{fig_procedure} (b) presents the indoor treadmill experimental setup, alongside a top-view schematic of the outdoor assisted walking route.
In outdoor experiments, participants are instructed to traverse the designated route at self-selected walking paces: slow, normal, and fast. 
To demonstrate the assistance effectiveness of \texttt{PbBO}, we compare the participants' metabolic rates, heart rate, and muscle activation under three conditions: without the exoskeleton (No Exo), with the exoskeleton but without assistance (Assist Off), and with the exoskeleton assisting (Assist On). The tested tasks include walking at different speeds on a treadmill, as well as walking along the aforementioned outdoor route.

\subsection{Evaluation Metrics and Parameters Configuration} \label{parameters}
The evaluation of \texttt{PbBO} consists of two aspects: the optimization effect of the algorithm itself and the assistance effect of the exoskeleton. Since the real hip torque profile is difficult to obtain directly, a Validation Accuracy (VA) is defined to indirectly measure the optimization effect, which is calculated as the ratio of the number of times the user selects the torque profile optimized by \texttt{PbBO} compared with the random profile to the total number of validation trials:
\begin{equation}\label{eq_26}
    \text{VA} = C_{opt}/C_{total},
\end{equation}
where $C_{opt}$ denotes the number of times the user preferred the optimized curve over the random curve during the testing phase, and $C_{total}$ represents the total number of choices.
The assistance performance of the exoskeleton is quantified by muscle activation $A_t$ measured by the Electromyography (EMG) device (ELONXI EMG-C4), metabolic rate $P_t$ measured using a COSMED K5 portable metabolic system, and heart rate $H_t$ measured by a GARMIN HRM device.
Muscle activation levels $A_t$ are typically quantified experimentally using the root mean square (RMS) of the EMG signal \(E_t\) (unit: \si{\micro\volt}), calculated as \cite{winters2012multiple,muscle}:
\begin{equation}
A_t=\sqrt{\frac{1}{T}\sum\nolimits_{t=1}^T E_t^2},
\end{equation}
with $T$ denoting the length of the time window employed. The heart rate $H_t$ (unit: BPM) is directly measured by the device through Bluetooth.
For the metabolic rate trials, where oxygen consumption $\dot{V}O_2$ and and carbon dioxide production $\dot{V}CO_2$ data are collected, the instantaneous metabolic cost $P_t$ (unit: W/kg) scaled by body mass $m$ is computed based on as \cite{zhang2017human,molinaro2024estimating}
\begin{equation}
    P_t=\frac{0.278\cdot \dot{V}O_2 +0.075\cdot \dot{V}CO_2}{m},
\end{equation}
The steady-state metabolic cost is determined by averaging the metabolic cost over the final three minutes of each five-minute trial. 
In preference-based Bayesian optimization, the hyperparameters $\theta$ in Eq. \eqref{eq_4}, noise $\sigma_n$, decay factor $\gamma$, initial value $\beta_0$ in Eq. \eqref{eq_decay}, the proportion of samples sampled from $\omega_t$, randomly sampled $f_s$, and total samples $N_s$ should be determined. For exoskeleton control, the dynamic model parameters $\bb{M}(\bb{q})$, $\bb{C}({\bb{q}},\dot{\bb{q}})$ and $\bb{G}(\bb{q})$ in Eq. \eqref{meq_1} are determined by exoskeleton model. The detailed parameter configuration is provided in Table \ref{Configuration}. The control parameters $\bb{X}=0.3$, $\bb{K}_p=0.5$ and $\bb{K}_d=0.005$ are adjusted to ensure the control stability.


\begin{table}
	\normalsize
	\caption{Parameters configuration during experiments.}
	\label{Configuration}
	\centering
	\resizebox{8.5cm}{!}{
        \begin{tabular}{ccc}
\toprule[1pt] 
\textbf{Parameter} & \textbf{Value} & \textbf{Description} \\
\midrule
$\theta$ & $0.1$ & Hyperparmater of RBF\\
$\sigma_n$ & $0.1$ & Noise level\\
$\gamma$ & $0.99$ & Decay factor of $\beta_t$\\
$\beta_0$ & $2.5$ & Initial value of $\beta_t$\\
$f_s$ & $0.9$ & Samples proportion from $\omega_t$\\
$\bb{M}(\bb{q})$ & $1.56\times 10^{-2}\bb{I}$ & Inertial matrix\\
$\bb{C}(\dot{\bb{q}},\bb{q})$ & $\bb{0}$ &Velocity-dependent matrix\\
$\bb{G}(\bb{q})$ & $0.879\sin (\bb{q})$ & Gravitational torque term\\
$\bb{X}$ & $0.3$ & Matrix of NDOB\\
\toprule[1pt] 
\end{tabular}
}
\end{table}


	\begin{figure*}
	\centering
	\includegraphics[width=1.0\textwidth]{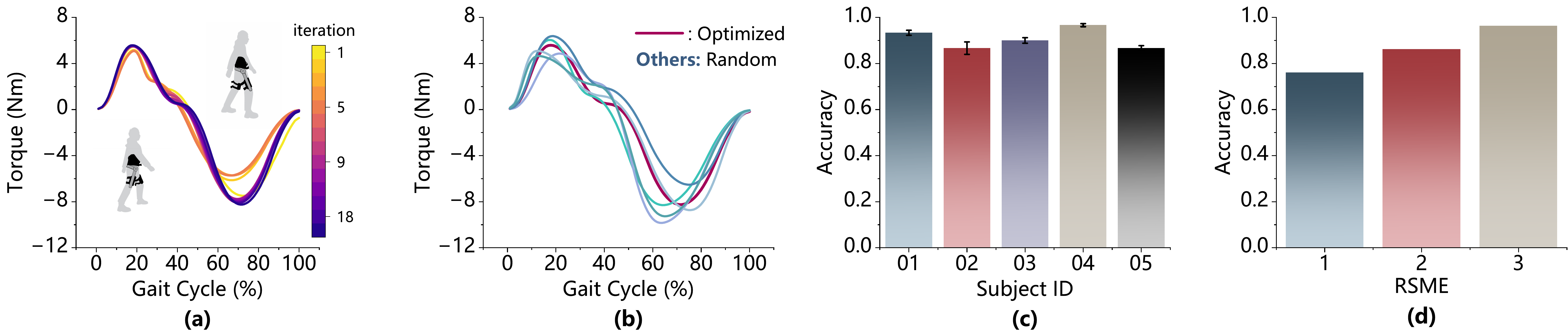}
	\caption{Evolution of the hip torque profile within the proposed optimization framework. (a) and (b) present the iteratively generated torque curves from the optimization session and randomly generated curves from the validation session, respectively. (c) demonstrates the validation accuracy of the optimized control parameters across different subjects. (d) displays the mean and standard deviation of validation accuracy categorized by root mean square error (RMSE) groups, where RMSE quantifies the deviation between the optimized and randomly generated torque curves.}. 
	\label{fig_validate_accuracy}
\end{figure*}

\section{Experimental Results}\label{experiments}
This section presents the experimental results of the aforementioned studies, which encompass the optimization efficiency and progression of the proposed framework (Section \ref{Progression}), the optimized control parameters across various walking speeds and subjects (Section \ref{speed}), the motor output torque curve and the assistance performance of optimized control parameters: reduction in metabolic cost, heart rate, and muscle activation in both indoor and outdoor experiments (Section \ref{Metabolic}).

\subsection{Optimization Progression of Hip Torque Profile}\label{Progression}
In Appendix C, this paper demonstrates that the proposed \texttt{PbBO} framework enhances optimization efficiency and stability by incorporating an adaptive sampling distribution $\omega_t$ and the trade-off factor $\beta_t$. To further evaluate the framework's performance, experiments are conducted with participants wearing powered hip exoskeletons under two distinct torque profile settings while walking on a treadmill at a constant speed of $0.6 \text{ m/s}$. The algorithm iteratively learns individual motion preferences and ultimately provides two optimized parameter recommendations. Fig. \ref{fig_validate_accuracy} (a) illustrates the progression of preferred hip torque profiles for a representative subject AB01 during both the optimization and validation sessions. The main session, which consists of optimization and validation phases, has an average duration of $20.6\pm4.6$ minutes per trial across all participants. Each trial includes two parts: preference elicitation and validation. Quantitative results demonstrate that the final optimized curve deviates substantially from the initial profile. As iterations proceed, the curve exhibits only marginal variations in the late phase, indicating that the optimization process converges to a steady state.
Fig. \ref{fig_validate_accuracy} (b) presents a comparative analysis between the randomly generated torque profiles and the optimized torque profile (obtained from the final iteration) during the validation phase. At a speed of $0.6 \text{ m/s}$, when making preference selections between the optimized profile and the random profile, Subject AB01 consistently chose the optimized one. This demonstrates a consistent preference for the optimized parameters over randomized alternatives.

\begin{figure*}
	\centering
	\includegraphics[width=0.99\textwidth]{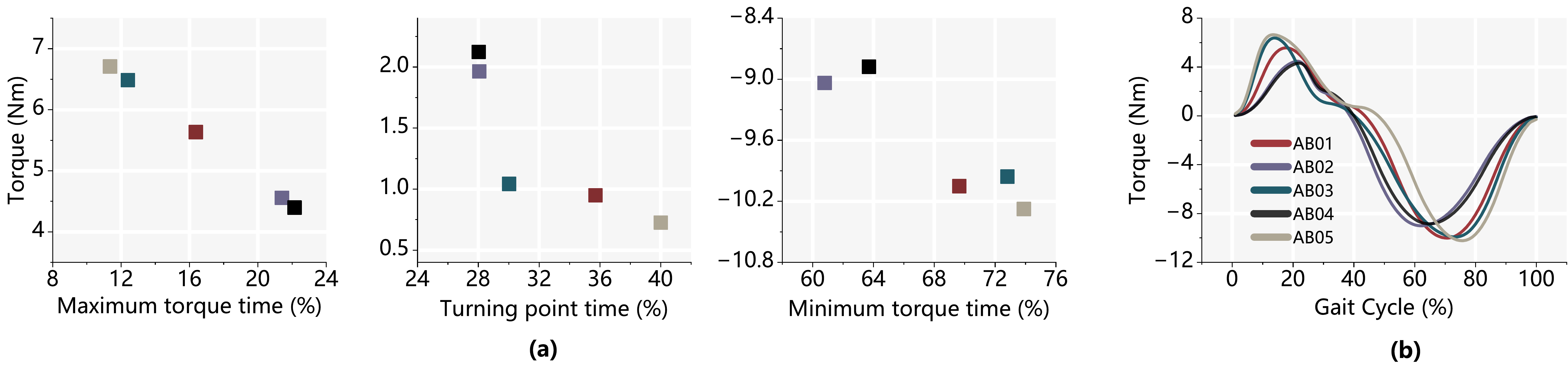}
	\caption{Optimization results of control parameters averaged across trials for different subjects under $1$ m/s. (a) illustrates the subject-specific optimized control parameters, while (b) compares the resulting hip torque profiles generated using these parameters across subjects.}. 
	\label{fig_nes}
\end{figure*}

Fig. \ref{fig_validate_accuracy} (c) presents the validation accuracy (VA) defined in Eq. \eqref{eq_26} for all participants (AB01-AB05). The validation accuracy is calculated as the percentage of trials in which the user selects the optimized torque profile over a randomized torque profile (Eq. \eqref{eq_26}), averaged across $5$ validation sessions. Note that each random parameter is confined within a prescribed range, and the resulting random curves are generally consistent with the overall trend of human motion.
The mean and standard deviation (SD) of the validation accuracy are derived from different tasks. The optimized torque profile selection rate reaches $90.7\pm1.3\%$ across all validation trials, averaged over all participants, and the maximum accuracy reaches $96.7\%$ for subject AB04, demonstrating a strong preference for the optimized parameters.  Fig. \ref{fig_validate_accuracy} (d) illustrates the correlation between the root-mean-square error (RMSE) of optimized torque profiles compared with random profiles and their corresponding selection accuracy rates during the validation phase. The results demonstrate an inverse relationship between RMSE and accuracy, such that lower estimation errors correspond to lower selection accuracy. This observed phenomenon provides insight into the underlying causes of misselection occurrences in the validation process.

\subsection{Optimization Results for Different Individuals and Speeds}\label{speed}
\subsubsection{\textbf{Optimization Results for Different Individuals}} Pronounced inter-individual differences in movement patterns across users give rise to distinct joint torque profiles. Fig. \ref{fig_nes} shows the optimized torque profile under the proposed optimization framework for different participants AB01-AB05. All participants walk on a treadmill at a constant speed of $1.0 \text{ m/s}$ to ensure consistent experimental conditions. Fig. \ref{fig_nes} (a) presents the distribution ranges of three critical points in the hip joint torque profiles for various users. The temporal and magnitude parameters for the maximum torque are distributed within $[5\%,25\%]$ of the gait cycle and $[2,8] \text{ Nm}$, respectively. For the minimum torque, the corresponding parameters range from $[50\%,80\%]$ of the gait cycle and $[-12,-8] \text{ Nm}$. The intermediate point parameters are distributed between $[25\%,45\%]$ of the gait cycle and $[0,3] \text{ Nm}$. As illustrated in Fig. \ref{fig_nes} (b), distinct optimization curves are observed across different users within the permissible parameter ranges. This result indicates that motion-preference characteristics vary to some extent across subjects.

\begin{figure}
	\centering
	\includegraphics[width=0.5\textwidth]{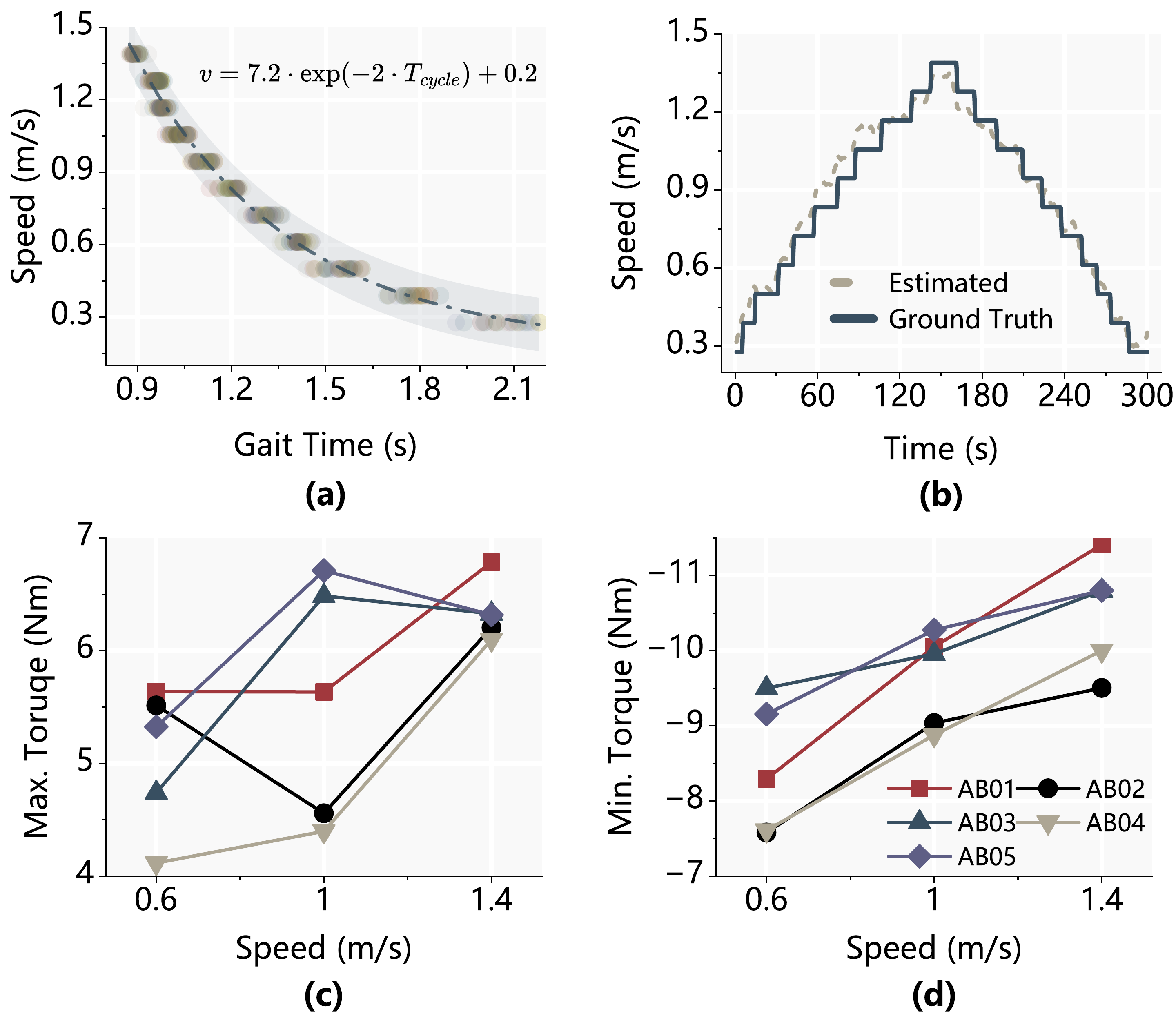}
	\caption{Velocity estimation results and optimization results for different walking speeds. (a) shows the relationship between walking speed $v$ and gait cycle time $T_{cycle}$. (b) compares the estimated speeds with the corresponding true values. (c) and (d) show the optimization results of control parameters for different tasks.}. 
	\label{fig_walking_speed}
\end{figure}

\subsubsection{\textbf{Optimization Results under Different Speeds}} In practical scenarios, the user's walking speed $v$ exhibits continuous variations. In this study, the duration of each gait cycle $T_{cycle}$ is measured using the motor's angular encoder, enabling regression-based estimation of the parameters in Eq. \eqref{eq_24} for walking speed prediction. Fig. \ref{fig_walking_speed} (a) illustrates the scatter distribution of encoder values across multiple subjects at varying walking speeds, with the regression analysis yielding parameter values of $A=7.2$, $B=0.2$, and $K=-2$. Building upon the aforementioned regression values, the exoskeleton can estimate the user's walking speed in real time during outdoor assistance. Fig. \ref{fig_walking_speed} (b) presents a comparative analysis between the estimated walking speed and the ground truth speed for participant AB01 on a treadmill. The RMSE between the estimated and actual walking speeds is calculated as $0.01 \text{ m/s}$. This verifies the acceptable accuracy of velocity estimation from motor angle measurements, which meets the demand for real-time user velocity estimation. 
Fig. \ref{fig_walking_speed} (c) and (d) present comparative diagrams of maximum torque and minimum torque for all subjects across three walking speeds, respectively. This further confirms that statistically significant inter-subject and inter-task variations are observed in subject-specific parameters. The results also indicate that as speed increases, the absolute values of both maximum and minimum torque exhibit an upward trend. 
This upward trend also validates the rationale of velocity-based optimization parameter selection in practical assistance scenarios.
By leveraging offline-optimized torque parameters for different speeds and incorporating real-time user speed estimation, adaptive assistance torque can be autonomously generated in practical scenarios.

\begin{figure}
	\centering
	\includegraphics[width=0.45\textwidth]{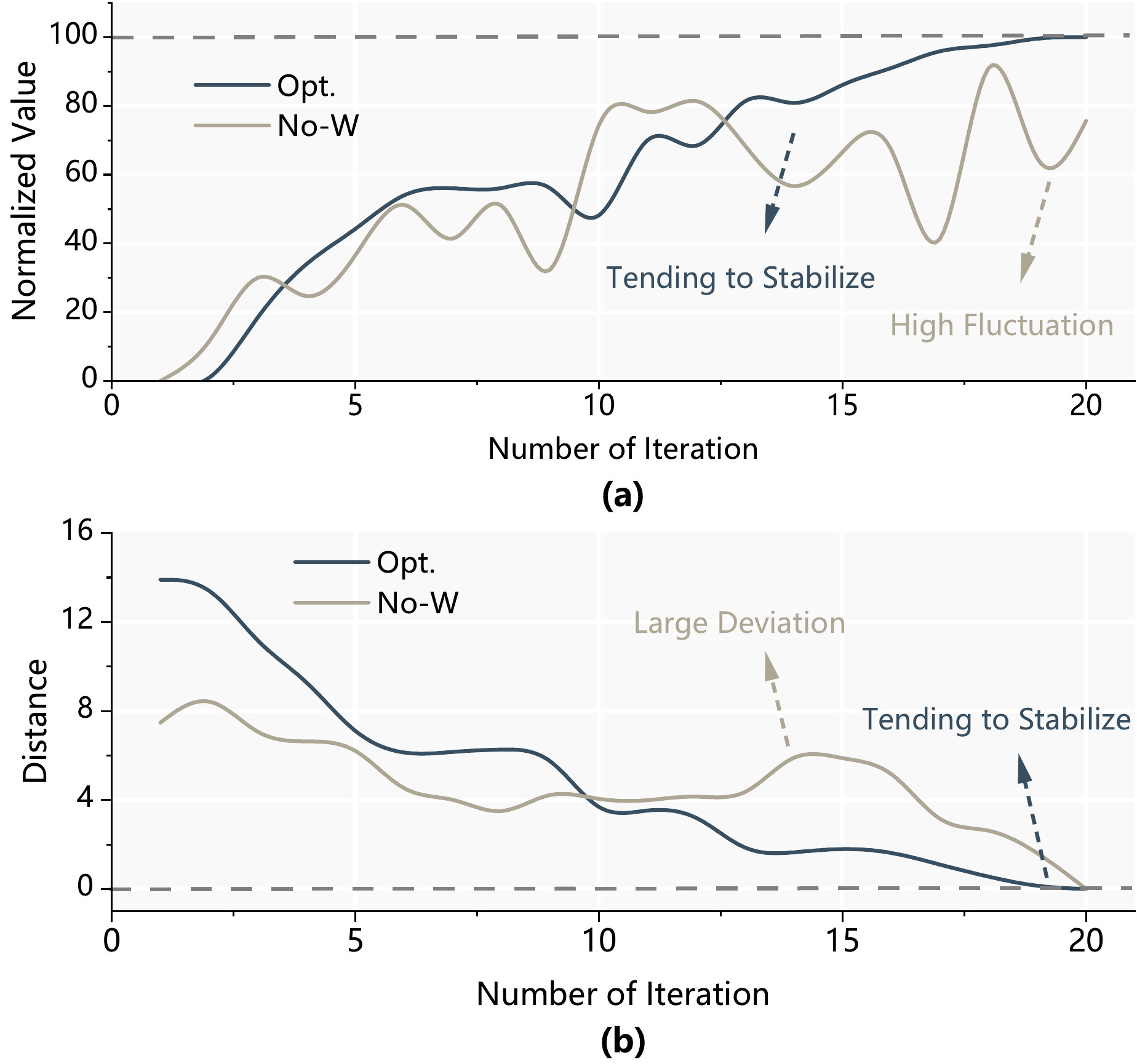}
	\caption{Analysis results of adaptive sampling distribution \(\omega_t\). (a) Curves of estimated mean values over $20$ iterations with and without \(\omega_t\). (b) Variations in distances between the six final optimized parameters and parameters at prior iterations for both schemes.}. 
	\label{fig_weight}
\end{figure}

\begin{figure}
	\centering
	\includegraphics[width=0.48\textwidth]{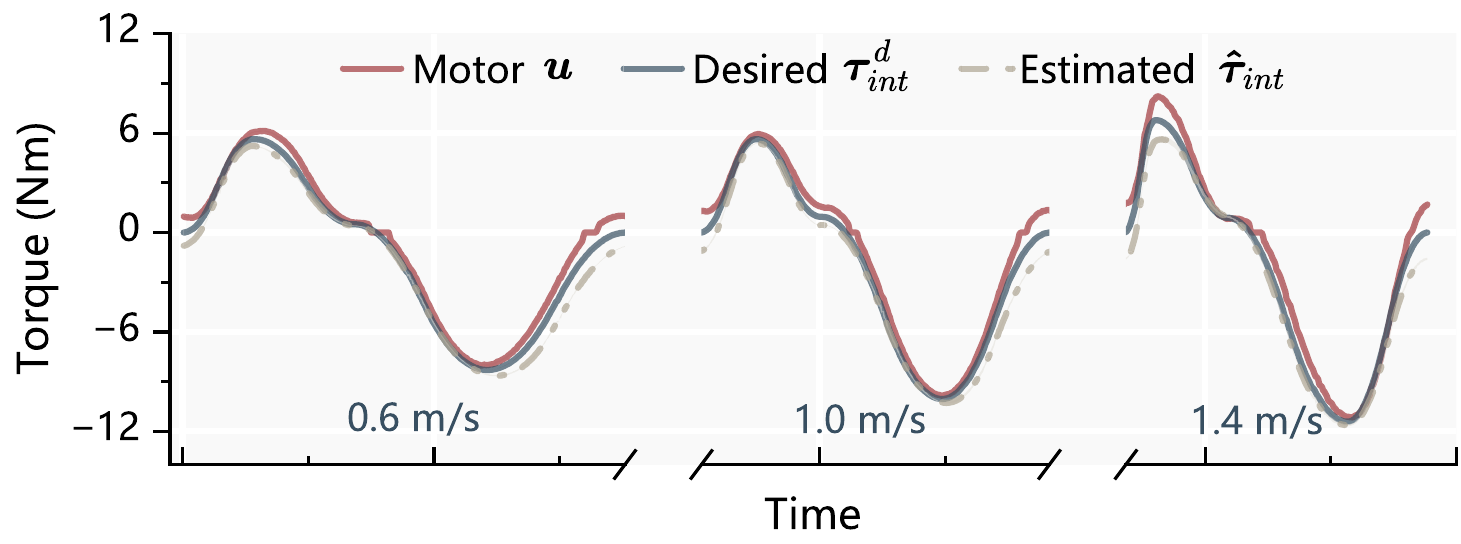}
	\caption{Comparison results among the motor output torque $\bb{u}$, the desired interaction torque $\bb{\tau}_{int}^d$, and the estimated interaction torque $\bb{\hat{\tau}}_{int}$.}. 
	\label{fig_control_curev}
\end{figure}

\subsubsection{\textbf{Effect of Adaptive Sampling Distribution}} Appendix C verifies that the introduction of the adaptive sampling distribution $\omega_t$ in the simulation environment significantly enhances the efficiency and performance of optimization. Fig. \ref{fig_weight} compares the actual results of the torque parameter optimization between the proposed algorithm and the baseline method without adaptive sampling $\omega_t$. Fig. \ref{fig_weight} (a) illustrates the comparison of the estimated mean objective values during the iterative optimization process. The results demonstrate that the proposed method (denoted as Opt.) achieves rapid convergence and stability within the limited $20$ iterations. In contrast, the method without adaptive sampling (denoted as No-W) shows an upward trend but a high fluctuation in the mean value, indicating that the algorithm has not yet stabilized. Fig. \ref{fig_weight} (b) illustrates the distances between the six optimized parameters at the $20$-th iteration and those obtained in all prior iterations. The proposed optimization strategy (Opt.) presents a decreasing trend and eventually converges steadily, which confirms its reliable optimization stability. By contrast, obvious fluctuations are observed without the adaptive sampling distribution, indicating inferior algorithm stability. These observations jointly verify that the introduction of an adaptive sampling distribution $\omega_t$ effectively improves the efficiency and stability of optimization.

\begin{figure}
	\centering
	\includegraphics[width=0.45\textwidth]{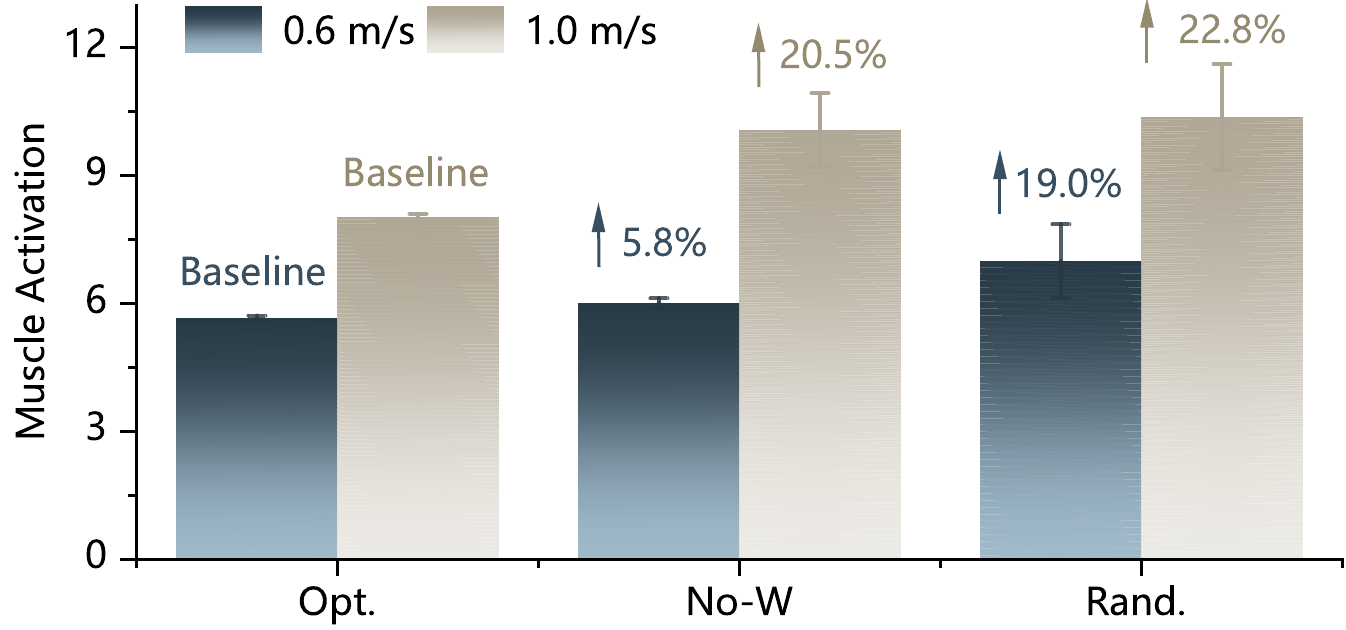}
	\caption{Comparison of peak muscle activation under different assistance torques at walking speeds $0.6$ and $1.0$ m/s. The abbreviation Opt. denotes the torque optimized by \texttt{PbBO}, Rand. denotes the randomly generated torque, and No-W denotes the optimized torque without the adaptive sampling distribution.}. 
	\label{fig_muscle_random}
\end{figure}

\subsection{Results of Assistance Performance} \label{Metabolic}
To evaluate whether the optimized torque profiles align with the user's locomotor preferences, this study comparatively analyzed physiological responses under three conditions: walking with exoskeleton assistance (Assist On), walking without assistance (Assist Off), and natural walking without the exoskeleton (No Exo). The evaluation metrics encompassed muscle activation levels, metabolic rate, and heart rate variations (The corresponding formulations are provided in Section \ref{experiments_setup}.).

\subsubsection{\textbf{Analysis of Motor Torque and Interaction Torque}} Given the inherent difficulties in obtaining direct measurements of human-robot interaction torque $\bb{\tau}_{int}$, this paper utilizes a nonlinear disturbance observer to estimate interaction torque. It controls motor output torque $\bb{u}$ to track the generated personalized interaction torque accurately. 
Fig. \ref{fig_control_curev} presents a comparative analysis between the estimated interaction torque $\hat{\bb{\tau}}_{int}$ (obtained via Eq. \eqref{eq_28}), the desired torque $\bb{\tau}_{int}^d$ (generated by the optimized parameters) and motor output torque $\bb{u}$ (obtained by Eq. \eqref{eq_control}) in the treadmill for different speeds. 
During deployment, the angles, angular velocities, and computed angular accelerations are all filtered with a Butterworth filter to mitigate noise.
Experimental results indicate that the RMSE error between the DOB-estimated interaction torque $\hat{\bb{\tau}}_{int}$ and the desired interaction torque $\bb{\tau}_{int}^d$ is $0.03$ N·m, demonstrating good tracking performance. The motor torque $\bb{u}$ is generally consistent with the individualized interaction torque $\bb{\tau}_{int}^d$, while notable deviation occurs at the peak positions.
This also indicates that the low-level controller satisfies the requirements of real-time assistance.

\begin{figure*}
  \centering
    \centering
    \includegraphics[width=\linewidth]{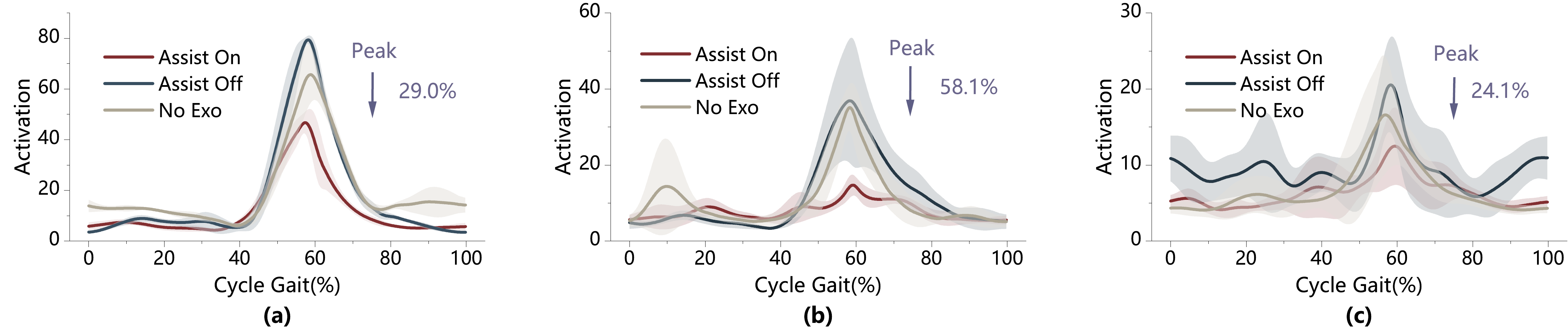}
  \caption{Comparison of average muscle excitation over a single gait cycle under three assistive conditions. (a), (b), and (c) report the experimental results obtained at walking speeds of 1.4 m/s, 1.0 m/s, and 0.6 m/s, respectively. For the muscle excitation results, the solid line denotes the mean value across multiple gait cycles, and the shaded region indicates the corresponding variance.}
  \label{fig_muscle}
\end{figure*}

\begin{figure*}
	\centering
	\includegraphics[width=0.98\textwidth]{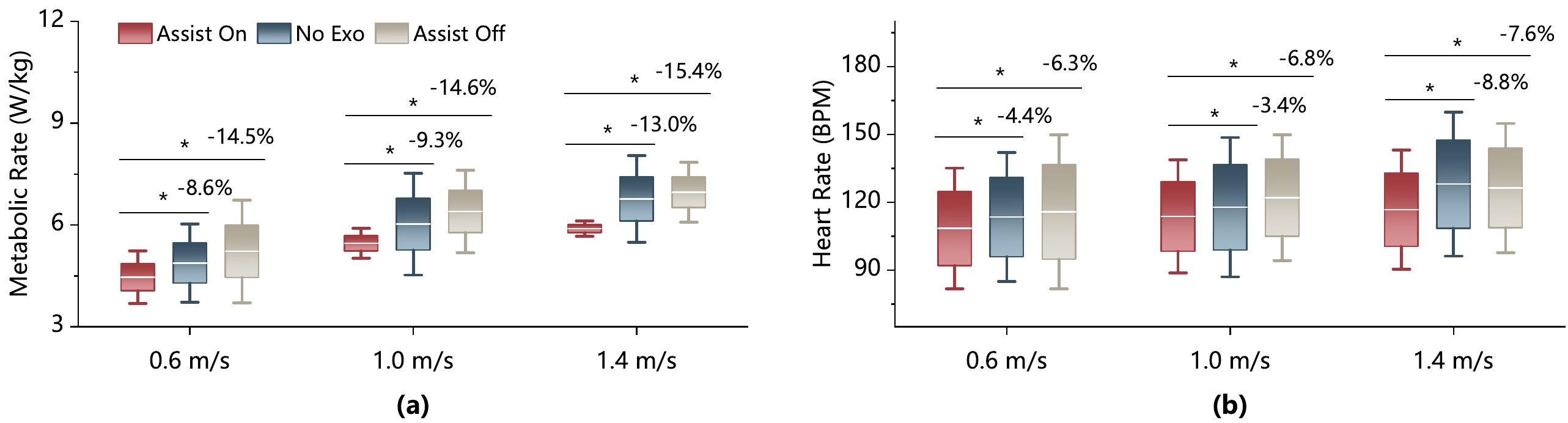}
	\caption{Comparison results of metabolic cost (a) and heart rate (b) under three assistance conditions (Assist On, No Exo and Assist Off) at three different treadmill speeds ($0.6$, $1.0$ and $1.4$ m/s).}. 
	\label{fig_metabolic_treadmill}
\end{figure*}

\subsubsection{\textbf{Muscle Activation Results}}
Fig. \ref{fig_muscle} presents a systematic comparison of muscle activation levels across three experimental conditions (Assist On, Assist Off, and No Exo) at three treadmill walking speeds, with all assistance tests conducted using optimized personalized assistance profiles generated by the framework proposed in this work. 
Compared with the No Exo and Assist Off control conditions, the user's thigh muscle activation levels are significantly reduced under the Assist On condition with optimized personalized assistance. 
Quantitatively, compared to the condition without wearing the exoskeleton (No Exo), the peak muscle excitation values after exoskeleton assistance at walking speeds of $1.4$, $1.0$, and $0.6$ m/s are reduced by $29.0\%$, $58.1\%$, and $24.1\%$, respectively. The integral of muscle excitation over one gait cycle is reduced by $31.5\%$, $26.7\%$, and $6.7\%$, respectively.
The above experimental results and analysis validate that the personalized assistance parameters optimized in the proposed framework can effectively assist the user by supplementing the torque output of the joint and reducing the muscular effort required for walking.

To further evaluate the efficacy of \texttt{PbBO}, Fig. \ref{fig_muscle_random} presents a comparative analysis of peak muscle activation levels, averaged over two subjects at two distinct walking speeds ($0.6$ and $1.0$ m/s). The comparison encompasses three torque assistance profiles: the optimized curve derived from \texttt{PbBO} (denoted as Opt.), a randomly generated curve (Rand.), and a curve obtained after $20$ iterations without an adaptive sampling distribution (No-W).
Note that each random parameter is constrained within a specified range, and the randomly generated torque curves still roughly conform to the general trend of the user's motion.
Experimental results indicate that, compared with the optimized torque, the curves generated with random parameters led to increases in average muscle activation of $19.0\%$ and $22.8\%$ under the two speed conditions, respectively. Without the curves optimized using the adaptive sampling distribution, muscle activation increased by $5.8\%$ and $20.5\%$, respectively.
This finding further validates that the adoption of an adaptive sampling distribution effectively enhances the optimization efficiency.

\subsubsection{\textbf{Metabolic Rate and Heart Rate Results}} Fig. \ref{fig_metabolic_treadmill} presents a comparative analysis of the average metabolic rate and heart rate during treadmill walking at speeds of $0.6$, $1.0$, and $1.4 \text{ m/s}$ under three experimental conditions: No Exo, Assist Off, and Assist On. 
Each trial comprises a walking session of $ 5$ minutes, with metabolic rates computed from breath-by-breath data collected during the final $3$ minutes to ensure steady-state measurements. 
The results demonstrate that the metabolic rates in the No Exo condition are $4.9 \pm 0.7$ (mean ± standard deviation), $6.0 \pm 1.2$, and $6.8 \pm 0.9 \text{ W/kg}$ under three tasks, respectively. In contrast, the Assist On condition yielded a significantly lower metabolic cost of $4.5\pm0.3$, $5.5\pm0.1$, and $5.9\pm0.0 \text{ W/kg}$, respectively.
Compared with the No Exo condition, exoskeleton assistance results in reductions in metabolic cost and heart rate of $8.6\%$, $9.3\%$, and $13.0\%$, and reductions in heart rate of $4.4\%$, $3.4\%$, and $8.8\%$, respectively. Compared with the Assist Off condition, the corresponding reductions are $14.5\%$, $14.6\%$, and $15.4\%$, and those in heart rate are $6.3\%$, $6.8\%$, and $7.6\%$, respectively.
The above results demonstrate that the optimized torque can effectively reduce the user's energy expenditure.

\begin{figure}
	\centering
	\includegraphics[width=0.40\textwidth]{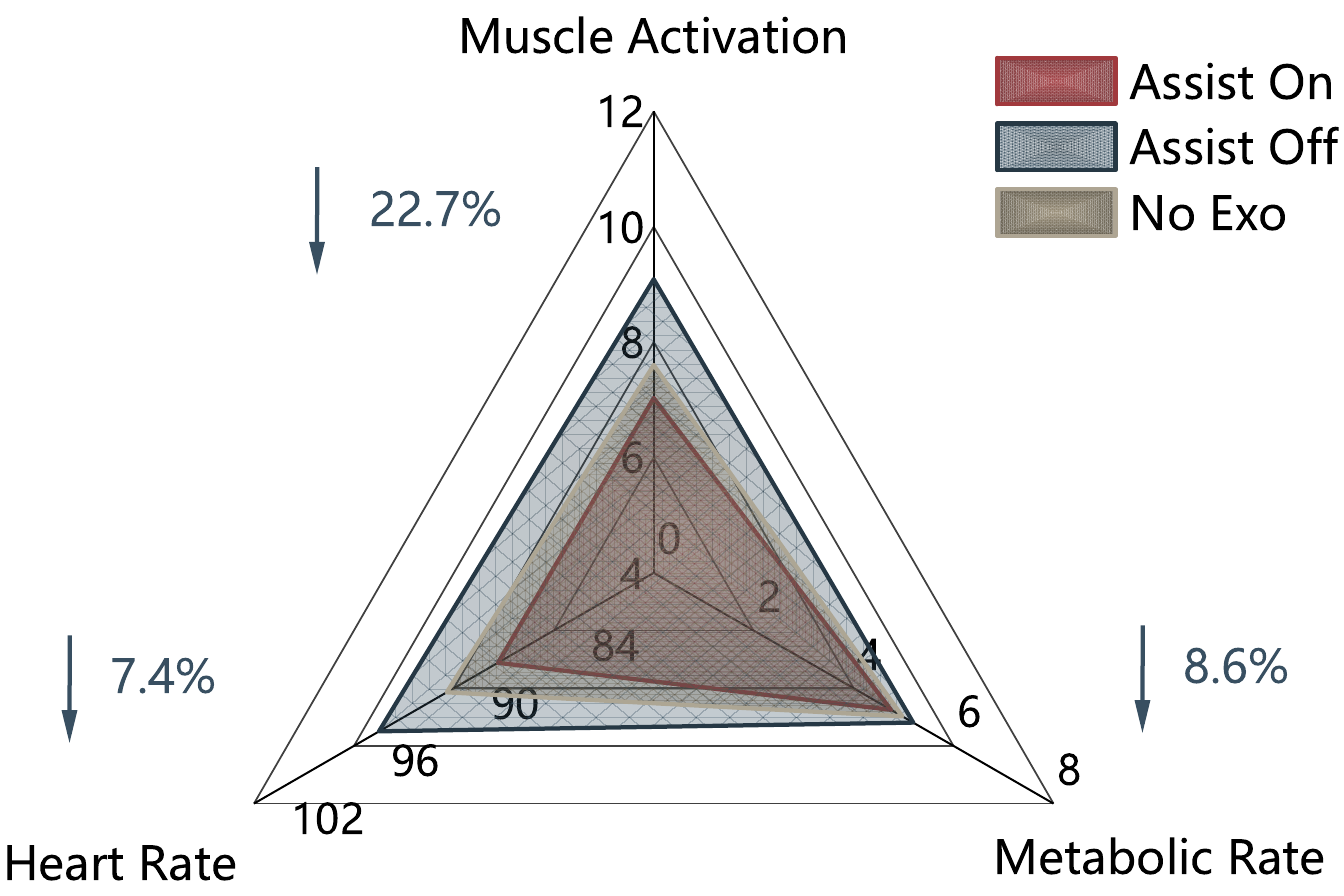}
	\caption{Experimental comparison of metabolic cost, heart rate, and average muscle activation under three conditions in outdoor assistance.}. 
	\label{fig_outdoor}
\end{figure}

Fig. \ref{fig_outdoor} compares the metabolic rate, heart rate, and average muscle activation during outdoor walking for three assistance conditions. Subjects are instructed to walk along the designated trajectory (Fig. \ref{fig_procedure} (b)), following the prescribed speed profile for each segment.
The total walking distance is $800$ m. Metabolic energy expenditure, heart rate, and muscle activation are synchronously recorded throughout the trial. The mean value of each variable over the entire course is adopted as the respective final metric.
Experimental results show that compared with the No Exo condition, exoskeleton assistance results in reductions in metabolic cost, heart rate, and muscle activation of $4.5\%$, $3.3\%$, and $7.5\%$, respectively. Compared with the Assist Off condition, the corresponding reductions are $8.6\%$, $7.4\%$, and $22.7\%$, respectively. The results demonstrate that the proposed optimization framework can effectively reduce user energy expenditure without requiring offline datasets or simulation environments.

\section{Discussion}\label{Discussion}
This section presents a thorough investigation into personalized assistance strategies for exoskeleton robots, focusing on two critical dimensions: the human-in-the-loop optimization (HILO) framework and joint torque estimation techniques. Then, we articulate the significance of the proposed method in the field of human–robot interaction, and critically examine the current limitations of the proposed system and outline promising avenues for future research and development.

\subsection{Discussion for Personalized Exoskeleton Assistance}
\subsubsection{\textbf{Human-in-the-loop Optimization}} Since human motion characteristics vary significantly across different individuals and tasks, personalized exoskeleton robots are of paramount importance. In current HILO approaches for exoskeletons, the algorithmic optimization objectives can be broadly categorized into three classes: individual impedance parameter optimization \cite{lizhi2022,2025TASE,10682783}, joint trajectory optimization \cite{2024chen,chen2024upper,sun2025optimization}, and torque curve optimization \cite{lee2023user,slade2022personalizing,arens2025preference}. The proposed framework in this work falls under the third category. The above aspects exhibit significant interdependencies: impedance control inherently outputs torque profiles, which fundamentally align with torque optimization objectives. Moreover, the impedance model itself relies on predicted joint trajectories, creating a coupled relationship between these optimization approaches. Li \e \cite{lizhi2022} regulated the impedance model with adaptive assistant powers for humans on different terrains. Chen \e \cite{2024chen,chen2024upper} optimized joint trajectories for different tasks and individuals, and then adjusted the impedance parameters through the generative model. The related works for torque optimization can be found in Section \ref{sec:Related work}. In summary, HILO algorithms require continuous human interaction for parameter optimization. While this inevitably introduces certain online computational costs, such approaches offer superior generalization capabilities and reduced dependence on extensive training datasets.

\subsubsection{\textbf{Human Joint Torque Estimation:}} Unlike human-in-the-loop algorithms, existing predictive joint torque algorithms rely heavily on large datasets. Current research in this field can be broadly categorized into two approaches: time-series prediction through multi-modal data fusion \cite{molinaro2024task,molinaro2024estimating,divekar2024versatile,liu2026exotraj}, and torque policy learning based on reinforcement learning \cite{luo2024experiment,2025TAI,luo2023robust}. The first approach primarily involves collecting multi-modal data from multiple subjects performing various tasks, followed by offline training of neural networks with temporal memory capabilities for online deployment \cite{molinaro2024task,molinaro2024estimating}. However, this method depends on professional motion capture systems, resulting in high data acquisition costs that limit its widespread adoption. Regarding the second approach, due to reinforcement learning's requirement for extensive interaction data and training instability, such methods typically rely on realistic simulation environments \cite{2025TAI}. These simulations model humans performing different walking tasks while employing reward functions to guide the agent in learning assistance policies \cite{luo2023robust,luo2024experiment}. Nevertheless, the reward functions quantifying assistance effectiveness often cannot be precisely expressed mathematically, which, to some extent, constrains performance improvement. In summary, while the aforementioned methods demonstrate promising research potential, critical challenges remain in addressing data scarcity, simulation-to-reality transfer, and model generalization capabilities.

\subsection{Practical Implications of \texttt{PbBO}}
In the field of human–robot interaction, certain individualized human dynamic data are inherently difficult to measure directly \cite{liu2026exotraj}. Although simulation platforms such as OpenSim can generate estimated data, they provide only partial information and their accuracy remains dependent upon the fidelity of the underlying musculoskeletal models \cite{molinaro2024estimating}. Moreover, significant inter-individual variability poses challenges for model generalization, making it difficult for a single model to accommodate all users.
Furthermore, human-related objective functions are difficult to articulate with precise mathematical formulations. Leveraging human feedback, such as preference information, enables the progressive inference of the underlying objective function, whose associated parameters can subsequently be optimized via suitable optimization algorithms \cite{lee2023user, arens2025preference}. Consequently, certain unmeasurable individualized dynamic data can be estimated.

However, prolonged online human–robot interaction tends to induce user fatigue, particularly among elderly individuals in daily life and patients in rehabilitation scenarios. To this end, the \texttt{PbBO} proposed in this work effectively enhances the efficiency of preference-based optimization, offering the following practical implications: 1) In human–robot interaction scenarios where dynamic data are difficult to measure, the \texttt{PbBO} algorithm can optimize the corresponding data curves through only a limited number of online human–robot interactions; 2) In medical rehabilitation scenarios, patient feedback and clinician evaluations can be integrated as preference input, and the \texttt{PbBO} algorithm can be employed to optimize the assistive torque profile, thereby maximizing the effectiveness of rehabilitation training.

\subsection{Discussion for Limitations and Future Work}
In Section \ref{sec:Related work}, a comparative analysis is presented between the proposed optimization framework and existing user preference optimization methods. Moreover, unlike conventional exoskeleton personalized assistance approaches, the proposed framework eliminates the dependency on extensive offline datasets or high-fidelity simulation environments. Instead, it determines optimal control parameters through minimal human-exoskeleton interaction, thereby substantially reducing user training overhead. However, the proposed methodology in this work has several limitations that warrant discussion:

1) The current system lacks environmental perception capabilities due to the absence of integrated sensors, which restricts experimental validation in complex terrains. Nevertheless, the proposed optimization framework remains theoretically applicable to such scenarios. Future research directions include incorporating neural network-based sensor fusion techniques for environmental awareness, thereby enabling effective assistance in complex environments.

2) The rigid structure of the exoskeleton may compromise user comfort during prolonged wear. Potential improvements involve the integration of elastic elements at joint mechanisms to enhance interaction compliance. Furthermore, the development of bio-inspired soft exoskeleton architectures could significantly improve wearing comfort and user acceptance.

3) The personalized torque generation relies on an offline optimization process. Although the proposed framework demonstrates superior efficiency in human-robot interaction cost reduction compared to existing preference-based methods, it still requires non-negligible human-in-the-loop time. Future work should investigate AI-driven optimization paradigms to establish more efficient exoskeleton assistance frameworks.

\section{Conclusion}\label{conclusion}
This paper combines preference learning and Bayesian optimization and proposes a novel user-preference-based optimization framework (\texttt{PbBO}). Optimization efficiency can be enhanced by introducing the adaptive sampling distribution of the candidate set. The proposed framework can determine relatively optimal control parameters with fewer human-robot interactions across different individuals and tasks. Moreover, the hierarchical control architecture is designed to achieve personalized assistance. The treadmill and real-world experimental results show that the metabolic rate, heart rate, and muscle activation can be effectively reduced through the torque optimized by \texttt{PbBO}.
This work highlights the potential of preference-based assistance optimization for hip exoskeletons, marking a step toward translating exoskeleton technologies into the real world.
	\bibliographystyle{IEEEtran}

	\newpage
\begin{appendix}\label{appendix}
\setcounter{equation}{0}
\renewcommand{\theequation}{A.\arabic{equation}}
\setcounter{figure}{0}
\renewcommand{\thefigure}{A.\arabic{figure}}
\setcounter{table}{0}
\renewcommand{\thetable}{A.\arabic{table}}

\subsection{Related Derivations} \label{app_der}

\subsubsection{\textbf{Derivations for Tracking Error}} According to Eq. \eqref{eq_26}- Eq. \eqref{eq_28}, we have \cite{mohammadi2013nonlinear,2017NDOB}
\begin{equation}
    \begin{aligned}
        \Delta{\dot{\bb{\tau}}} &= {\bb{\dot\tau}_{int}^d} - {\bb{\dot{\hat\tau}}_{int}}\\
        & = {\bb{\dot\tau}_{int}^d} +\bb{\dot{Z}} + \bb{\dot{P}}(\bb{q},\dot{\bb{q}})\\
        & = {\bb{\dot\tau}_{int}^d}+ \bb{L}\left(-\bb{Z}-\bb{u}+\bb{Q}-\bb{P}\right)+ \bb{\dot{P}}\\
        & = {\bb{\dot\tau}_{int}^d}+\bb{L}(\bb{P}+\bb{{\hat\tau}}_{int})-\bb{L}(\bb{u}-\bb{Q}+\bb{P})+ \bb{\dot{P}}\\
        & = {\bb{\dot\tau}_{int}^d}+\bb{L}\bb{{\hat\tau}}_{int}-\bb{L}(M\ddot{\bb{q}}+\bb{\tau}_{int}^d)+ \bb{\dot{P}}\\
        & = {\bb{\dot\tau}_{int}^d}-\bb{L}\Delta{{\bb{\tau}}}-\bb{L}M\ddot{\bb{q}}+ \bb{\dot{P}}.
    \end{aligned}
\end{equation}
Then, by Eq. \eqref{eq_28}, we have $-\bb{L}\Delta{{\bb{\tau}}}-\bb{L}M\ddot{\bb{q}}+ \bb{\dot{P}} = 0$. Therefore, we can conclude that
\begin{equation}
    \Delta{\dot{\bb{\tau}}} = -\bb{X}^{-1} \bb{M}^{-1}(\bb{q}) \Delta{{\bb{\tau}}}+\dot{\bb{\tau}}_{int}^d
\end{equation}
This completes the derivation for tracking error (Eq. \eqref{eq_track}).

\subsubsection{\textbf{Derivations for Preference-based GP Model}} 

This part shows the updating details of $\bb{\mu}_t$ in Eq. \eqref{eq_7} and $\mathbf{W}_t$ in Eq. \eqref{eq_9}. Here, the probability density function $\phi(x)$ and cumulative distribution function $\Phi(x)$ of the standard normal distribution are denoted as 
\begin{equation}
    \begin{aligned}
        \phi(x)=\frac{1}{\sqrt{2\pi}}e^{-x^2/2},\quad \Phi(x) = \int_{-\infty}^x \phi(t) dt.
    \end{aligned}
\end{equation}

To solve $\bb{\mu}_t$, we use the gradient descent method. The gradient $\nabla L(\bb{f})$ of Eq. \eqref{eq_8} can be derived as
\begin{equation}\label{eq_a.2}
    \nabla L(\bb{f}) =\frac{\partial L(\bb{f})}{\partial \bb{f}}= \bb{\Sigma}^{-1}_t\bb{f}+\bb{G}_t,
\end{equation}
where $\bb{\Sigma_t}$ can be calculated through Eq. \eqref{eq_4}, and $\bb{G} \in \mathcal{R}^{2t\times 2t}$ can be represented as
\begin{equation}
    G_{2i}=-\frac{p_i\cdot \phi(z_i)}{\sqrt{2}\sigma_n\cdot \Phi(z_i)},\quad i=0,1,...,t-1
\end{equation}
where $z_i=p_i[f(x_{i,0})-f(x_{i,1})]/(\sqrt{2}\sigma_n)$, and $G_{2i+1}=-G_{2i}$. Then, once $\bb{G}_t$ is obtained, and $\bb{\mu}_t$ can be optimized through Eq. \eqref{eq_a.2}. The posterior covariance can be solved through a second-order Taylor expansion under $\bb{\mu}_t$  \cite{chu2005preference}, that is
\begin{equation}
\begin{aligned}
         \nabla^2 L(\bb{f})&=\left.\frac{\partial^2 L(\bb{f})}{\partial f(\bb{x})\partial f^T(\bb{x})}\right|_{\bb{f}=\bb{\mu}_t}\\
         &=\bb{\Sigma}_t^{-1} +\frac{\partial^2 \sum_{i=1}^m -\log \Phi(z_i)}{\partial f(\bb{x})\partial f^T(\bb{x})}\\
         &=\bb{\Sigma}_t^{-1} +\mathbf{W}_t,
\end{aligned}
\end{equation}
Then, the $(i,j)$-th entry of $\mathbf{W}_t$ can be described as
\begin{equation}
    W_{i,j}=-\frac{\mathbb{I}_k(\bb{x}_i)\mathbb{I}_k(\bb{x}_j)p_k\phi(z_k)}{2\sigma_n^2}\left(\frac{\phi(z_k)+z_k\Phi(z_k)}{\Phi^2(z_k)}\right),
\end{equation}
where $\mathbb{I}_k(\bb{x})$ is a indicator function which is $1$ if $\bb{x}=\bb{x}_{k,0}$; $-1$ if $\bb{x}=\bb{x}_{k,1}$; otherwise 0. Therefore, $\mathbf{W}_t$ can be updated when $\bb{\mu}_t$ is obtained.

\subsubsection{\textbf{Derivations for Information Gain}} 
The information gain (mutual information) $I(\bb{y}_T|\bb{f}_T)$ can be expressed as the difference between information entropy, that is
\begin{equation}
    I(\bb{y}_T|\bb{f}_T)=H(\bb{y}_T)-H(\bb{y}_T|\bb{f}_T)
\end{equation}
where $H(\bb{y}_T)$ is prior entropy and $H(\bb{y}_T|\bb{f}_T)$ is conditional entropy. The $I(\bb{y}_T|\bb{f}_T)$ reflects the extent to which the observation $\bb{y}_T \in \mathcal{R}^{2T}$ reduces the uncertainty about $\bb{f}_T\in \mathcal{R}^{2T}$. For Gaussian distribution $X\sim\mathcal{N}(\mu,\Sigma)$, the information entropy $H(X)$ can be expressed as $H(X)=\frac{1}{2}\log |2\pi e\Sigma|$. Then, since \( \bb{y}_{T}=\bb{f}_T+\bb{n}_T \) and $\bb{n}_T \sim \mathcal{N}(0,\sigma_n^2\bb{I})$, the conditional entropy $H(\bb{y}_T|\bb{f}_T)$ can be derived as \cite{chowdhury2017kernelized}: 
\begin{equation}
    H(\bb{y}_T|\bb{f}_T) = \frac{1}{2}\log \left|2\pi e\sigma_n^2\bb{I}\right|=T\log \left(2\pi e\sigma_n^2\right).
\end{equation}

According to $\bb{y}_T \sim \mathcal{N}(0,\sigma_n^2\bb{I}+\bb{\Sigma}_T)$, we can derive:
\begin{equation}\label{eq_a.8}
    I(\bb{y}_T|\bb{f}_T)=\frac{1}{2}\log \Big|\bb{I}+\frac{\bb{\Sigma}_T}{\sigma_n^{2}}\Big|=\frac{1}{2}\sum_{i=1}^{K}\left(1+\frac{\lambda_i}{\sigma_n^{2}}\right).
\end{equation}
where $\lambda_1,\lambda_2,...,\lambda_K$ are the $K$ eigenvalues of $\bb{\Sigma}_T$. Then, we can conclude another type of $I(\bb{y}_T|\bb{f}_T)$. According to $P(f_{t}|\bb{f}_{t-1}, \bb{x}_{t})=\mathcal{N}(m_{t-1}(\bb{x}_{t}),c_{t-1}^2(\bb{x}_{t}))$, we can derive:
\begin{equation}
    H(y_t|\bb{y}_{t-1})=\frac{1}{2}\log \left[2\pi e\sigma_n^2\left(1+\frac{c_{t-1}^2(\bb{x}_{t})}{\sigma_n^{2}}\right)\right].
\end{equation}
Then, since $H(\bb{y}_T)=\sum_{t=1}^TH(y_t|\bb{y}_{t-1})$, the information gain (mutual information) $I(\bb{y}_T|\bb{f}_T)$ can be derived as
\begin{equation}
    \frac{1}{2}\sum_{t=1}^T\log \left(1+\frac{c^2_{t-1}(\bb{x}_{t,0})}{\sigma_n^{2}}\right)\left(1+\frac{c^2_{t-1}(\bb{x}_{t,1})}{\sigma_n^{2}}\right).
\end{equation}
This completes the derivation for information gain (Eq. \eqref{eq_17}).

\subsection{Related Proofs} \label{proof}
\subsubsection{\textbf{Proof for Theorem \ref{the_2}}} Since $\bb{K}_p$ and $\bb{K}_d$ are positive constant diagonal matrices, $\bb{A}$ is negative definite and diagonal (all eigenvalues of $\bb{A}$ are strictly negative). Therefore, we construct positive definite and diagonal matrix $\bb{P}$ which satisfies Lyapunov equation $\bb{A}^T\bb{P}+\bb{P}\bb{A}=-\bb{I}$. Furthermore, we define the Lyapunov candidate function:
\begin{equation}
    V = \bb{e}_r^T\bb{P}\bb{e}_r>0.
\end{equation}
By taking the derivative of the above equation, we have 
\begin{equation}
    \dot{V} = \dot{\bb{e}}_r^T\bb{P}\bb{e}_r+\bb{e}_r^T\bb{P}\dot{\bb{e}}_r.
\end{equation}
Substituting Eq. \eqref{eq_32} and $\bb{A}^T\bb{P}+\bb{P}\bb{A}=-\bb{I}$ into above equation, we can derive
\begin{equation}
     \dot{V} = -\bb{e}_r^T\bb{e}_r+2\bb{e}_r^T\bb{P}\bb{B}\Delta\bb{\tau}.
\end{equation}
Using the Cauchy-Schwarz inequality and $\|\Delta\bb{\tau}\|\leq C e^{-\alpha t}$, we can conclude
\begin{equation}
    \dot{V} \leq -\|\bb{e}_r\|^2+D\|\bb{e}_r\|e^{-\alpha t},
\end{equation}
where $D=2C\|\bb{P}\bb{B}\|$. When $t$ is sufficiently large, the term $e^{-\alpha t}$ diminishes in magnitude, ensuring $\dot{V}<0$ and thus resulting in the asymptotic convergence of $\bb{e}_r$ to zero.

In the following, we analyze the convergence rate of $\bb{e}_r$ to zero. We first introduce the auxiliary variable $W = \sqrt{V}$. Then, we have
\begin{equation}
    \dot{W}=\frac{\dot{V}}{2\sqrt{V}}\leq\frac{-\|\bb{e}_r\|^2+D\|\bb{e}_r\|e^{-\alpha t}}{2\sqrt{V}},
\end{equation}
Since $\lambda_{min}(P)\|\bb{e}_r\|^2\leq V \leq \lambda_{max}(P)\|\bb{e}_r\|^2$, we can derive that
\begin{equation}
    \dot{W}\leq -\frac{W}{2\lambda_{max}(P)} + \frac{De^{-\alpha t}}{2\sqrt{\lambda_{min}(P)}},
\end{equation}
Then, by solving the differential inequality, we derive the following equation
\begin{equation*}
    W(t)\leq \left(W(0)-\frac{G}{\xi-\alpha}\right)e^{-\xi t}+\frac{G}{\xi-\alpha}e^{-\alpha t}, \quad \xi \neq \alpha
\end{equation*}
where $G = D/2\sqrt{\lambda_{min}(P)}$ and $\xi = 1/(2\lambda_{max}(P))$. $W(t)$ converges to zero at a minimum exponential exponential convergence rate $\min \{\alpha,\xi\}$. 

Since $\sqrt{\lambda_{min}(P)}\|\bb{e}_r\|\leq W$, $\bb{e}_r$ also converges to zero at the same rate. $\bb{A}=-\bb{K}_d^{-1}(\bb{I}+\bb{K}_p)$ is negative definite and diagonal, and $\bb{P} = -\bb{A}^{-1}/2$ is only solution to equation $\bb{A}^T\bb{P}+\bb{P}\bb{A}=-\bb{I}$. Therefore, we derive that $\xi =\lambda_{min}(\bb{K}_d^{-1}(\bb{I}+\bb{K}_p))$. This completes the proof for Theorem \ref{the_2}.

\subsubsection{\textbf{Proof for Theorem \ref{the_1}}} In Lemma 5.5 of \cite{srinivas2010gaussian}, they derived that for any $\delta\in(0,1)$, set $\beta_t=2\log (\pi_t/\delta)$ and the below equation holds with probability at least $1-\delta$:
\begin{equation}\label{eq_a.10}
    \left|f(\bb{x}_t)-m_{t-1}(\bb{x}_t)\right|\leq \beta_t^{1/2} c_{t-1}(\bb{x}_t),
\end{equation}
where $\sum_t\pi_t^{-1}=1$. In our setting, $\beta_t=\gamma\beta_{t-1}=\gamma^t\beta_0$, thus $\beta_0$ can be derived as:
\begin{equation}\label{eq_a.11}
    \beta_0(T)=\frac{2(1-\gamma)}{\gamma-\gamma^{T+1}}\log \left(\frac{\prod_{t=1}^{T} \pi_t}{\delta^T}\right),
\end{equation}
where $\gamma$ is decay factor and $\gamma\in(0,1)$. At each round $t$, the $\bb{x}_t$ selected by Eq. \eqref{eq_12}, we have $m_{t-1}(\bb{x}^*)+{\beta_{t}}^{1/2}c_{t-1}(\bb{x}^*)\leq m_{t-1}(\bb{x}_t)+{\beta_{t}}^{1/2}c_{t-1}(\bb{x}_t)$. Then, according to Eqs. \eqref{eq_19} and \eqref{eq_a.10}, we can derive 
\begin{equation}
    \begin{aligned}
        r_t=&f(\bb{x}^*)-f(\bb{x}_{t,0})+f(\bb{x}^*)-f(\bb{x}_{t,1})\\
        \leq & m_{t-1}(\bb{x}_{t,0})-f(\bb{x}_{t,0})+{\beta_{t-1,0}}^{1/2}c_{t-1}(\bb{x}_{t,0})+\\
        &m_{t-1}(\bb{x}_{t,1})-f(\bb{x}_{t,1})+{\beta_{t-1,1}}^{1/2}c_{t-1}(\bb{x}_{t,1})\\
        \leq &2{\beta_{t-1,0}}^{1/2}c_{t-1}(\bb{x}_{t,0})+2{\beta_{t-1,1}}^{1/2}c_{t-1}(\bb{x}_{t,1})\\
        \leq &2\sqrt{\beta_0(T)}\left(c_{t-1}(\bb{x}_{t,0})+c_{t-1}(\bb{x}_{t,1})\right).
    \end{aligned}
\end{equation}
where $\beta_{0}(T)=\max\{\beta_{0,0},\beta_{1,0}\}$, the last inequality holds because $\beta_t=\gamma \beta_{t-1}$ and $\gamma<1$. $\beta_{0,0}$ and $\beta_{1,0}$ are the initial value determined through Eq. \eqref{eq_a.11}, which are used to solve $\bb{x}_{t,0}$ and $\bb{x}_{t,1}$, respectively. Then, the cumulative regret bound
$R_T\leq 2\sqrt{\beta_0(T)}\sum_{t=1}^T \left(c_{t-1}(\bb{x}_{t,0})+c_{t-1}(\bb{x}_{t,1})\right)$. By the Cauchy-Schwarz inequality, we have
\begin{equation}
\begin{aligned}
        &\sum_{t=1}^T \Big(c_{t-1}(\bb{x}_{t,0})+c_{t-1}(\bb{x}_{t,1})\Big)\\
        \leq&\sqrt{2T\sum_{t=1}^T\Big(c_{t-1}^2(\bb{x}_{t,0})+c_{t-1}^2(\bb{x}_{t,1})\Big)},
\end{aligned}
\end{equation}

\begin{figure}
	\centering
	\includegraphics[width=0.38\textwidth]{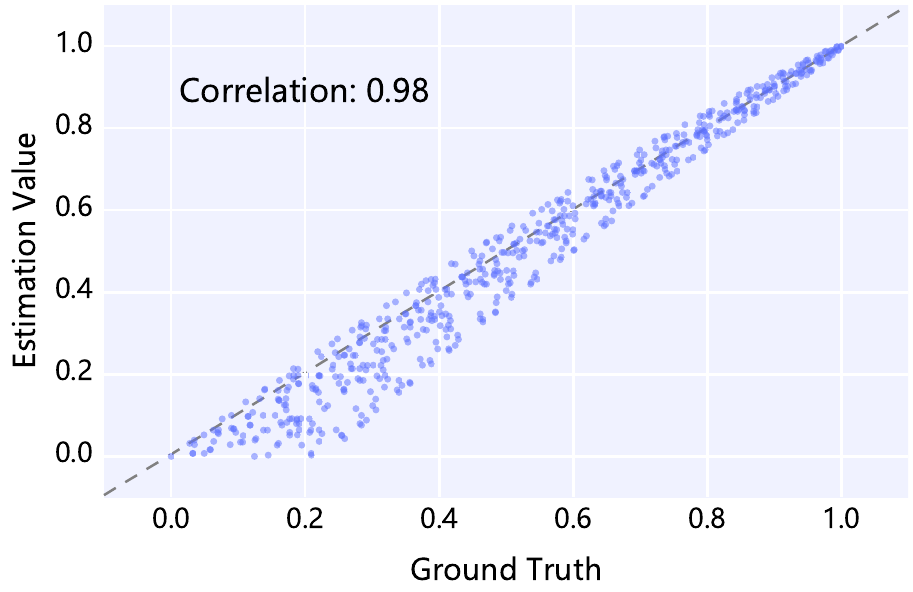}
	\caption{The comparison between the prediction value by the learned preference model and the ground truths of function $f(\bb{x})$. The value estimated by the learned preference model and the ground-truth value are both normalized to $[0,1]$.}. 
	\label{fig_preference}
\end{figure}

Considering that inequality $\log(1+x)>x/2$ holds when $0<x<1$, and $0<c_{t}^2<1$ and $0<\sigma_n^2<1$, the information gain $I(\bb{y}_T|\bb{f}_T)$ satisfies:
\begin{equation}
    I(\bb{y}_T|\bb{f}_T)\geq \frac{1}{4}\sum_{t=1}^T \Big(c^2_{t-1}(\bb{x}_{t,0})+c^2_{t-1}(\bb{x}_{t,1})\Big)
\end{equation}
Then, according to Eq. \eqref{eq_18}, $\mathcal{H}_T\geq I(\bb{y}_T|\bb{f}_T)$. Therefore, the cumulative regret bound satisfies:
\begin{equation}
    R_T\leq 4\sqrt{2\beta_0(T)T\mathcal{H}_T}=\mathcal{O}\Big(\sqrt{\beta_0(T)T\mathcal{H}_T}\Big).
\end{equation}
This completes the proof for Theorem \ref{the_1}.

\subsection{Toy Example for the Proposed Optimization Framework} \label{toy_experiments}
In Section \ref{method}, we illustrate how to combine preference learning and Bayesian Optimization, and improve optimization efficiency and stability through introducing an adaptive sampling distribution $\omega_t$ and the new form of trade-off factor $\beta_t$. Since the true objective function of the exoskeleton is unknown, it is challenging to visualize the discrepancy between the learned preferences (implicit objective function) and the true objective function, as well as the corresponding training process curves. To facilitate the visualization of the results of the proposed preference-based optimization, we define a simple optimization goal $f(\bb{x})$ here, which has a unique optimal solution:
\begin{equation}\label{eq_e.1}
    f(\bb{x})=-(x_1+2)^2-(x_2-2)^2-x_3^2-(x_4-1)^2,
\end{equation}
where $\bb{x}\in [-10,10]^4$. Obviously, the $f(\bb{x})$ can be achieved maximin value at $[-2,2,0,1]$. In the next part, we analyze the performance of the learned preference model and compare the optimization efficiency between our proposed method and the typical Bayesian Optimization method, GP-UCB (no sample distribution) \cite{srinivas2010gaussian}. Please note that the preference model component remains unchanged; only the optimization methodology has been modified. In this toy example, $\beta_0$ is set as $0.25$, $\gamma$ is set as $0.99$, the noise level $\sigma_n$ is $0.1$, and hyper-parameter $\theta$ in Eq. \eqref{eq_4} is $0.1$. The preference labels are given according to the $f(\bb{x})$ values corresponding to the two sets of points.

\begin{figure*}
	\centering
	\includegraphics[width=1\textwidth]{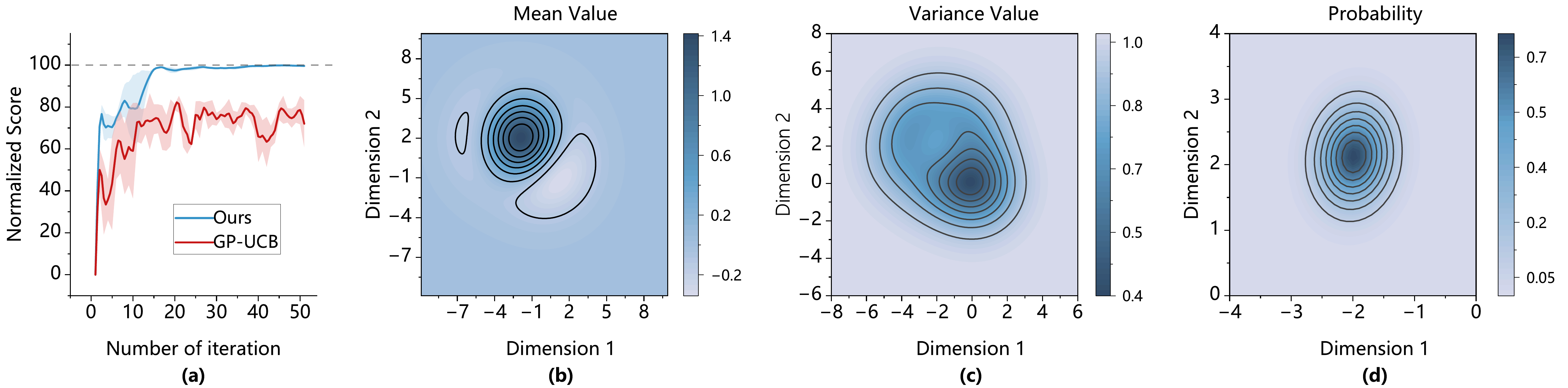}
	\caption{The visualization results of the proposed method. (a) compares the optimization curve between our improved BO and standard BO (GP-UCB), where the true score is computed through $f(\bb{x})$, and the scores are normalized into $[0,100]$ and the reference minimal and maximal values are $-5$ and $0$, respectively. (b) and (c) show the estimated mean and variance value distribution in variable space. Since the space is four-dimensional, for the convenience of visualization, we fix the third and fourth dimensions, and the values are set as $0$ and $1$, respectively. (d) shows the sampling probability of the designed sampling distribution $\omega$ in the whole space, where we also fix the last two dimensions.}. 
	\label{fig_toy_preference}
\end{figure*}

\textbf{The accuracy of the preference model based on GP.} Since the actual (true) optimization function is known to us, the evaluation value of the preference model at specific points can be compared with the true value. The value $\mu$ of the preference model can be obtained through Eq. \eqref{eq_7}. To ensure the consistency of comparison, the true value and the estimated value through preference learning are normalized to $[0,1]$, which can reflect the relative accuracy of the learned preference model. Fig. \ref{fig_preference} compares the results between the estimated value through the preference model and the true value for the objective function $f(\bb{x})$. This figure indicates that the learned preference model based on GP is basically consistent with the true objective function, and the correlation reaches $0.98$. In summary, the GP-based preference model can accurately reflect human preferences (in this case, the relative size of the $f(\bb{x})$ value).

\textbf{The optimization efficiency of the proposed optimization method.} Fig. \ref{fig_toy_preference} illustrates the performance of the proposed optimization framework. Fig. \ref{fig_toy_preference} (a) compares the optimization performance and speed between our improved BO method and the standard method (GP-UCB). It shows that the optimization speed, performance, and stability can be significantly improved through modifying the trade-off factor $\beta$ and introducing an adaptive sampling distribution $\omega$. Fig. \ref{fig_toy_preference} (b) and Fig. \ref{fig_toy_preference} (c) present the distribution of mean and variance values in the whole space. For the convenience of visualization, we fix the last two dimensions ($x_3=0,x_4=1$) and give the distribution of estimated mean and variance values about the first two dimensions. They indicate that the mean value is relatively large and the variance value is relatively small in the region centered at point $(-2,2)$. Fig. \ref{fig_toy_preference} (d) shows the sampling probability of the points under the designed adaptive sampling distribution. It indicates that in the region centered on point $(-2,2)$, the sampling probability is the highest, which is conducive to increasing the optimization speed. Therefore, the optimization speed, performance, and stability of our proposed optimization framework are better than those of the standard BO algorithm.

\end{appendix}

\end{document}